\def\eqref#1{equation~\ref{#1}}
\def\1{\bm{1}}
\DeclareMathAlphabet{\mathsfit}{\encodingdefault}{\sfdefault}{m}{sl}
\SetMathAlphabet{\mathsfit}{bold}{\encodingdefault}{\sfdefault}{bx}{n}
\newtheorem{theorem}{Theorem}
\newtheorem{remark}{Remark}
\newtheorem{proposition}{Proposition}
\newtheorem{definition}{Definition}[section]
\newtheorem{proof}{Proof}[section]
\newcommand{\cmark}{\ding{51}}%
\newcommand{\xmark}{\ding{55}}%
\title{Bridge the Inference Gaps of Neural Processes via Expectation Maximization}
\author{Qi Wang \thanks{Correspondence Author.},\ Marco Federici,\ Herke van Hoof\\
AMLab,
University of Amsterdam,
1098XH, Amsterdam, the Netherlands \\
\texttt{hhq123go@gmail.com}, \texttt{
m.federici@uva.nl}, \texttt{h.c.vanhoof@uva.nl}
}
\begin{document}

\maketitle

\begin{abstract}
The neural process (NP) is a family of computationally efficient models for learning distributions over functions.
However, it suffers from under-fitting and shows suboptimal performance in practice.
Researchers have primarily focused on incorporating diverse structural inductive biases, \textit{e.g.} attention or convolution, in modeling.
The topic of inference suboptimality and an analysis of the NP from the optimization objective perspective has hardly been studied in earlier work.
To fix this issue, we propose a surrogate objective of the target log-likelihood of the meta dataset within the expectation maximization framework. 
The resulting model, referred to as the Self-normalized Importance weighted Neural Process (SI-NP), can learn a more accurate functional prior and has an improvement guarantee concerning the target log-likelihood.
Experimental results show the competitive performance of SI-NP over other NPs objectives and illustrate that structural inductive biases, such as attention modules, can also augment our method to achieve SOTA performance.
Our code is available at \url{https://github.com/hhq123gogogo/SI_NPs}.
\end{abstract}

\section{Introduction}\label{intro}
\begin{wrapfigure}{r}{0.3\textwidth}
  \begin{center}
    \includegraphics[width=0.3\textwidth]{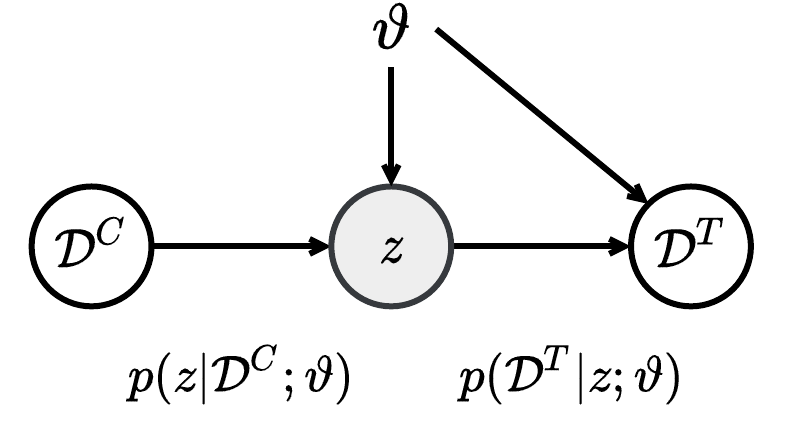}
  \end{center}
  \caption{\textbf{Deep Latent Variable Models for Neural Processes.}
  Here $\mathcal{D}^{C}$ and $\mathcal{D}^{T}$ respectively denote the context points for the functional prior inference and the target points for the function prediction.
  The global latent variable $z$ is to summarize function properties. 
  The model involves a functional prior distribution $p(z\vert\mathcal{D}^C;\vartheta)$ and a functional generative distribution $p(\mathcal{D}^T\vert z;\vartheta)$.
  Please refer to Section (\ref{prelimi_sec}) for detailed notation descriptions.}
  \vspace{-25pt}
  \label{SFNN_NP}
\end{wrapfigure}

The combination of deep neural networks and stochastic processes provides a promising framework for modeling data points with correlations \citep{ghahramani2015probabilistic}.
It exploits the high capacity of deep neural networks and enables uncertainty quantification for distributions over functions.

As an example, we can look at the deep Gaussian process \citep{damianou2013deep}.
However, the run-time complexity of predictive distributions in Gaussian processes is cubic \textit{w.r.t.} the number of predicted data points.
To circumvent this, \cite{garnelo2018conditional,garnelo2018neural} developed the family of neural processes (NPs) as the alternative, which can model more flexible function distributions and capture predictive uncertainty at a lower computational cost.

In this paper, we study the vanilla NP as a deep latent variable model and show the generative process in Fig. (\ref{SFNN_NP}).
In particular, let us recap the inference methods used in vanilla NPs: It learns to approximate the functional posterior $q_{\phi}(z)\approx p(z\vert\mathcal{D}^{T};\vartheta)$ and a functional prior $q_{\phi}(z\vert\mathcal{D}^{C})\approx p(z\vert\mathcal{D}^{C};\vartheta)$, which are permutation invariant to the order of data points. 
Then the predictive distribution for a data point $[x_*,y_*]$ can be formulated in the form $\mathbb{E}_{q_{\phi}(z\vert\mathcal{D}^{C})}\left[p(y_*\vert[x_*,z];\vartheta)\right]$.

While the NP provides a computationally efficient framework for modeling exchangeable stochastic processes, it exhibits underfitting and fails to capture accurate uncertainty \citep{garnelo2018neural,kim2019attentive} in practice.
To improve its generalization capability, researchers have focused much attention on finding appropriate inductive biases, \textit{e.g.} attention \citep{kim2019attentive} and convolutional modules \citep{gordon2019convolutional,kawano2020group}, Bayesian mixture structures \citep{wanglearning} or Bayesian hierarchical structures \citep{naderiparizi2020uncertainty}, to incorporate in modeling. 

\textbf{Research Motivations.}
Most previous work \citep{garnelo2018conditional,garnelo2018neural,kim2019attentive,gordon2019convolutional,wanglearning} ignores the reason why the vanilla NP suffers the performance bottleneck and what kind of functional priors the vanilla NPs can represent.
In particular, we point out the remaining crucial issues that have not been sufficiently investigated in this domain, respectively:
(i) understanding the inference suboptimality of vanilla NPs
(ii) quantifying statistical traits of learned functional priors.
To this end, we try to diagnose the vanilla NP from its optimization objective.
Our primary interest is to find a tractable way to optimize NPs and examine the statistics of learned functional priors from diverse optimization objectives.


\textbf{Developed Methods.}
To understand the inference suboptimality of vanilla NPs, we establish connections among a collection of optimization objectives, \textit{e.g.} approximate evidence lower bounds (ELBOs) and Monte Carlo estimates of log-likelihoods, in Section (\ref{infersub_sec}).
Then we formulate a tractable optimization objective within the variational expectation maximization framework and obtain the Self-normalized Importance weighted neural process (SI-NP) in Section (\ref{method}).

\textbf{Contributions.}
To summarize, our primary contributions are three-fold:
(i) we analyze the inherent inference sub-optimality of NPs from an optimization objective optimization perspective;
(ii) we demonstrate the equivalence of conditional NPs \citep{garnelo2018conditional} and SI-NPs with one Monte Carlo sample estimate, which closely relates to the prior collapse in \textbf{Definition} (\ref{def_prior_collapse});
(iii) our developed SI-NPs have an improvement guarantee to the likelihood of meta dataset in optimization and show a significant advantage over baselines with other objectives.

\section{Preliminaries}\label{prelimi_sec}

\textbf{General Notations.}
We study NPs in a meta learning setup.
$\mathcal{T}$ defines a set of tasks with $\tau$ a sampled task.
Let $\mathcal{D}_{\tau}^{C}=\{(x_i,y_i)\}_{i=1}^{n}$ and $\mathcal{D}_{\tau}^{T}=\{(x_i,y_i)\}_{i=1}^{n+m}$ denote the context points for the functional prior inference and the target points for the function prediction.
The latent variable $z$ is a functional representation of a task $\tau$ with observed data points.

We refer to $\vartheta\in\Theta$ as the parameters of the deep latent variable model for NPs.
In detail, $\vartheta$ consists of encoder parameters in a functional prior $p(z\vert\mathcal{D}_{\tau}^{C};\vartheta)$ and decoder parameters in a generative distribution $p(\mathcal{D}_{\tau}^{T}\vert z;\vartheta)$. 
$\phi$ refer to the parameters of a variational posterior distribution $q_{\phi}(z)=q_{\phi}(z\vert\mathcal{D}_{\tau}^{T})$, while $\eta$ refer to the parameters of a proposal distribution $q_{\eta}(z)$ in the following self-normalized importance sampling.
Gaussian distributions with diagonal covariance matrices are the default choice for these distributions, \textit{e.g.} $p(z\vert\mathcal{D}_{\tau}^{C};\vartheta)=\mathcal{N}(z;\mu_{\vartheta}(\mathcal{D}_{\tau}^{C}),\Sigma_{\vartheta}(\mathcal{D}_{\tau}^{C}))$, $q_{\phi}(z)=\mathcal{N}(z;\mu_{\phi}(\mathcal{D}_{\tau}^{T}),\Sigma_{\phi}(\mathcal{D}_{\tau}^{T}))$ and $q_{\eta}(z\vert\mathcal{D}_{\tau}^{T})=\mathcal{N}(z;\mu_{\eta}(\mathcal{D}_{\tau}^{T}),\Sigma_{\eta}(\mathcal{D}_{\tau}^{T}))$.

\textbf{NPs as Exchangeable Stochastic Processes.}
In vanilla NPs, the element-wise generative process can be translated into Eq. (\ref{gener_nps}).
Here the mean and variance functions are respectively denoted by $\mu$ and $\Sigma$.
\begin{equation}
    \begin{split}
        \rho_{x_{1:n+m}}(y_{1:n+m})=\int p(z)\prod_{i=1}^{n+m}\mathcal{N}(y_i;\mu(x_i,z),\Sigma(x_i,z))dz
    \end{split}
    \label{gener_nps}
\end{equation}
Based on the Kolmogorov extension theorem \citep{klenke2013probability} and de Finneti’s theorem \citep{kerns2006definetti}, the above equation $\rho_{x_{1:n+m}}(y_{1:n+m})$ is verified to be a  well-defined exchangeable stochastic process.

\textbf{NPs in Meta Learning Tasks.}
Given a collection of tasks $\mathcal{T}$, we can decompose the marginal distribution $p(\mathcal{D}_{\mathcal{T}}^{T}\vert\mathcal{D}_{\mathcal{T}}^{C};\vartheta)$ with a global latent variable $z$ in Eq. (\ref{meta_np_decomp}).
Here the conditional distribution $p(z\vert\mathcal{D}_{\tau}^{C};\vartheta)$ with $\tau\in\mathcal{T}$ is permutation invariant \textit{w.r.t.} the order of data points and encodes the functional prior in the generative process.
\begin{equation}
\begin{split}
    \underbrace{p(\mathcal{D}_{\mathcal{T}}^{T}\vert\mathcal{D}_{\mathcal{T}}^{C};\vartheta)}_{\text{Marginal Likelihood}}
    =\prod_{\tau\in\mathcal{T}}\left[\int \underbrace{p(\mathcal{D}_{\tau}^{T}\vert z;\vartheta)}_{\text{Generative Likelihood}}\underbrace{p(z\vert\mathcal{D}_{\tau}^{C};\vartheta)}_{\text{Functional Prior}}dz\right]
\end{split}
\label{meta_np_decomp}
\end{equation}
Throughout the paper, the optimization objective of our interest is the marginal log-likelihood of a meta learning dataset in Eq. (\ref{mlcn_obj}).
Furthermore, this applies to all NP variants.
\begin{equation}
\begin{split}
    \max_{\vartheta}\sum_{\tau\in\mathcal{T}}\ln
    \left[\int p(\mathcal{D}_{\tau}^{T}\vert z;\vartheta)p(z\vert\mathcal{D}_{\tau}^{C};\vartheta)dz\right]
\end{split}
\label{mlcn_obj}
\end{equation}
For the sake of simplicity, we consider one task $\tau$ to derive equations in the following section, which corresponds to maximizing the following objective\footnote{Meta training and testing phases are implemented in a batch of tasks consistent with Eq. (\ref{meta_np_decomp})/(\ref{mlcn_obj}).}.
\begin{equation}
    \begin{split}
        \mathcal{L}(\vartheta)=\ln\left[\int p(\mathcal{D}_{\tau}^{T}\vert z;\vartheta)p(z\vert\mathcal{D}_{\tau}^{C};\vartheta)dz\right]
    \end{split}
    \label{mlcn_single_obj}
\end{equation}
In the NP family \citep{garnelo2018conditional,garnelo2018neural}, the target data points are conditionally independent given the global latent variable $p(\mathcal{D}_{\tau}^{T}\vert z;\vartheta)=\prod_{i=1}^{n+m}p(y_i\vert[x_i,z];\vartheta)$.
The marginal distribution can be interpreted as the infinite mixture of distributions when $z$ is defined on a continuous domain.
Now learning distributions over functions is reduced to a probabilistic inference problem.

\section{Optimization Gaps and Statistical Traits}\label{infersub_sec}

\subsection{Inference Suboptimality in vanilla NPs}\label{infersub_subsec}

Previously, variational auto-encoder (VAE) models \citep{kingma2013auto,rezende2014stochastic} mostly set a prior distribution fixed, \textit{e.g.} $\mathcal{N}(0,I)$, as the default to approximate the posterior.
This differs significantly from NPs family settings.
On the one hand, the functional prior is learned in NPs.
On the other hand, the functional prior participates in the performance evaluation.

\textbf{Exact ELBO for NPs.}
Following the essential variational inference operation, we can establish connections between the exact ELBO and the log-likelihood in Eq. (\ref{lik_decomp}).
Given the functional prior $p(z\vert\mathcal{D}_{\tau}^{C};\vartheta)$ and the generative distribution $p(\mathcal{D}_{\tau}^{T}\vert z;\vartheta)$, the exact functional posterior can be obtained by the Bayes rule
$$p(z\vert\mathcal{D}_{\tau}^{T};\vartheta)=\frac{p(\mathcal{D}_{\tau}^{T}\vert z;\vartheta)p(z\vert\mathcal{D}_{\tau}^{C};\vartheta)}{\int p(\mathcal{D}_{\tau}^{T}\vert z;\vartheta)p(z\vert\mathcal{D}_{\tau}^{C};\vartheta)dz}.$$
The denominator $p(D_{\tau}^{T}\vert D_{\tau}^{C})$ makes exact inference infeasible, and the family of variational posteriors is introduced to approximate $p(z\vert\mathcal{D}_{\tau}^{T};\vartheta)$.
Here the variational posterior family is defined in a parameterized set $\mathcal{Q}_{\Phi}=\{q_{\phi}(z)\vert\phi\in\Phi\}$.
\begin{equation}
\begin{split}
    \mathcal{L}(\vartheta)
    =\ln p(\mathcal{D}_{\tau}^{T}\vert \mathcal{D}_{\tau}^{C};\vartheta)=\underbrace{\mathbb{E}_{q_{\phi}(z)}\left[\ln\frac{p(\mathcal{D}_{\tau}^{T},z\vert \mathcal{D}_{\tau}^{C};\vartheta)}{q_{\phi}(z)}\right]}_{\text{Exact ELBO}}
    +\underbrace{D_{KL}\left[q_{\phi}(z)\parallel p(z\vert\mathcal{D}_{\tau}^{T};\vartheta)\right]}_{\text{Posterior Approximation Gap}}
\end{split}
    \label{lik_decomp}
\end{equation}
When the variational posterior family is flexible enough, \textit{e.g.} $p(z\vert\mathcal{D}_{\tau}^{T};\vartheta)\in\mathcal{Q}_{\Phi}$, the posterior approximation gap can be reduced to an arbitrarily small quantity.
In this case, maximizing the exact ELBO in Eq. (\ref{vi_np_exact_obj}) increases the likelihood in Eq. (\ref{lik_decomp}) accordingly.
\begin{equation}
\begin{split}
    \mathcal{L}_{\text{ELBO}}(\vartheta,\phi)=\mathbb{E}_{q_{\phi}(z)}\left[\ln p(\mathcal{D}_{\tau}^{T}\vert z;\vartheta)\right]-D_{KL}\left[q_{\phi}(z)\parallel p(z\vert\mathcal{D}_{\tau}^{C};\vartheta)\right]
\end{split}
    \label{vi_np_exact_obj}
\end{equation}

\textbf{Approximate ELBO for NPs.}
As previously mentioned, the inference is complicated since the functional prior and the posterior in the exact ELBO are unknown.
To this end, \citet{garnelo2018neural} proposes a surrogate objective as an approximate ELBO for NPs.
This is defined as Eq. (\ref{vi_np_obj}) to maximize.
\begin{equation}
\begin{split}
    \mathcal{L}_{\text{NP}}(\vartheta,\phi)=\mathbb{E}_{q_{\phi}(z)}\left[\ln \underbrace{p(\mathcal{D}_{\tau}^{T}\vert z;\vartheta)}_{\text{Generative Likelihood}}\right]-\underbrace{D_{KL}\left[q_{\phi}(z)\parallel q_{\phi}(z\vert\mathcal{D}_{\tau}^{C})\right]}_{\text{Consistent Regularizer}}
\end{split}
    \label{vi_np_obj}
\end{equation}
The Kullback-Leibler divergence between the approximate posterior $q_{\phi}(z)$ and the approximate prior $q_{\phi}(z\vert\mathcal{D}_{\tau}^{C})$ is referred to as the consistent regularizer in this paper.
We claim that the consistent regularizer is the source of the inference suboptimality of vanilla NPs, and this is shown in Appendix (\ref{append_approx_suboptim}) as the proof of Remark (\ref{remark_np_suboptim}).

\begin{remark}\label{remark_np_suboptim}
Eq. (\ref{vi_np_obj}) is an invalid variational inference objective, and optimizing it cannot guarantee to find optimal or locally optimal solutions for the maximization over $\sum_{\tau\in\mathcal{T}}\ln p(\mathcal{D}_{\tau}^{T}\vert\mathcal{D}_{\tau}^{C};\vartheta)$.
\end{remark}

\textbf{Other Available Objectives in NPs Family.}
Now we turn to other tractable optimization objectives in NPs.
These include conditional neural processes (CNPs) \citep{garnelo2018conditional} and convolutional neural processes (ConvNPs) \citep{foong2020meta} (or VERSA \citep{gordon2018meta}). 

The CNP is also a typical model of the NPs family.
The objective $\mathcal{L}_{\text{CNP}}(\vartheta)$ can be obtained when the functional prior collapses into a Dirac delta distribution with $\hat{z}$ a fixed real-value vector.
\begin{equation}
\begin{split}
    \mathcal{L}_{\text{CNP}}(\vartheta)=\mathbb{E}_{p(z\vert\mathcal{D}_{\tau}^{C};\vartheta)}\left[\ln p(\mathcal{D}_{\tau}^{T}\vert z;\vartheta)\right]
\end{split}
\quad
\text{with}
\quad
p(z\vert\mathcal{D}_{\tau}^{C};\vartheta)=\delta(|z-\hat{z}|)
    \label{cnp_obj}
\end{equation}
For the ConvNP, we do not focus on the convolutional structural inductive bias and concentrate more on the optimization objective itself.
Its objective in Eq. (\ref{npml_obj}) is a biased Monte Carlo estimate of Eq. (\ref{mlcn_obj}), so we can maximize the log-likelihood of marginal distributions straightforwardly.
\begin{equation}
\begin{split}
    \mathcal{L}_{\text{ML-NP}}(\vartheta)=\ln\left[\frac{1}{B}\sum_{b=1}^{B}\exp\left(\ln p(\mathcal{D}_{\tau}^{T}\vert z^{(b)};\vartheta)\right)\right]
\end{split}
\quad
\text{with}
\quad
z^{(b)}\sim p(z\vert\mathcal{D}_{\tau}^{C};\vartheta)
    \label{npml_obj}
\end{equation}
This is termed as Monte Carlo Maximum Likelihood $\mathcal{L}_{\text{ML-NP}}(\vartheta)$\footnote{The Monte Carlo maximum likelihood corresponds to the optimization objective that in \citep{foong2020meta} except that the convolutional inductive bias is removed.}, and $B$ is the number of used Monte Carlo samples.
Without the involvement of the consistent regularizer, both $\mathcal{L}_{\text{CNP}}(\vartheta)$ and  $\mathcal{L}_{\text{ML-NP}}(\vartheta)$ will not encounter the approximation gap in practice.

\subsection{Evaluation Criteria \& Asymptotic Performance}\label{infersub_eval}

As in \citep{le2018empirical,foong2020meta}, we take a multi-sample Monte Carlo method to evaluate the performance.
This applies to both meta training and meta testing processes.
In detail, NP models need to run $B$ times stochastic forward pass by sampling $z^{(b)}\sim p(z\vert\mathcal{D}_{\tau}^{C};\vartheta)$ and then compute the log-likelihoods as $\ln\left[\frac{1}{B}\sum_{b=1}^{B} p(\mathcal{D}_{\tau}^{T}\vert z^{(b)};\vartheta)\right]$.

After describing the evaluation criteria, we turn to a phenomenon of our interest.
In Gaussian processes \citep{ghahramani2015probabilistic}, with more observed context points, the epistemic uncertainty can be decreased, and the predictive mean function is closer to the ground truth.

Similarly, this trait is also reflected in NPs family and can be quantitatively described as follows.
Given a measure of average predictive errors $\beta$, the number of context points $n$ and the evaluated dataset, $\mathcal{D}_{\tau}^{T}$, the introduced metrics $\beta(\mathcal{D}_{\tau}^{T};n)$ are decreased when increasing $n$ in prediction.
In our paper, we refer to this trait as the \textit{asymptotic behavior}.

\begin{definition}[Prior Collapse]\label{def_prior_collapse}
The functional prior $p(z\vert\mathcal{D}_{\tau}^{C};\vartheta)=\mathcal{N}(z;\mu_{\vartheta}(\mathcal{D}_{\tau}^{C}),\Sigma_{\vartheta}(\mathcal{D}_{\tau}^{C}))$ is said to collapse in learning when the trace of the covariance matrix 
satisfies $\text{Tr}[\Sigma_{\vartheta}(\mathcal{D}_{\tau}^{C})]=\sum_{i=1}^{d}\sigma_i^2\approx 0$ with $\Sigma_{\vartheta}(\mathcal{D}_{\tau}^{C})=\text{diag}[\sigma_1^2,\dots,\sigma_d^2]$.
\end{definition}

As previously mentioned, we can more precisely keep track of measures $\beta(\mathcal{D}_{\tau}^{T};n)$, such as predictive log-likelihoods or mean square errors of data points, to assess the asymptotic behavior.
The role of latent variables $z$ is to propagate the uncertainty about the partial observations in functions.
And \textbf{Definition} (\ref{def_prior_collapse}) provides a quantitative way to examine the extent of prior collapse in ML-NPs and SI-NPs.

\section{Tractable Optimization via Expectation Maximization}\label{method}
\begin{figure}[h]
\centering
\includegraphics[width=0.9\textwidth]{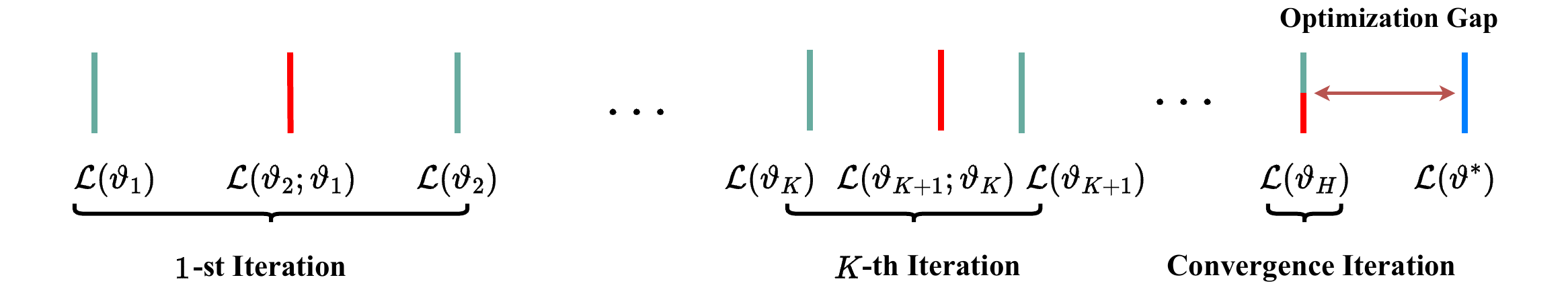}
\caption{\textbf{Illustration of Expectation Maximization for NPs.}
Green lines indicate the results after the $\texttt{E}$-steps while the red lines are for the $\texttt{M}$-steps in Algorithm (\ref{vem_pseudo}).
In the convergence iteration, the performance gap $\mathcal{L}(\vartheta_H)-\mathcal{L}(\vartheta_{H-1})$ is close to zero and the algorithm results in at least a local optimal solution.
Values of these quantities are increased from the left to the right.
}
\label{em_fig}
\end{figure}

In this section, we propose alleviating the inference suboptimality of NPs with the help of the variational expectation maximization algorithm.
The strategy is to formulate a surrogate optimization objective and then execute the $\texttt{EM}$-steps in optimization.
The benefit of our method is to guarantee performance improvement \textit{w.r.t.} the likelihood of meta dataset in iterations and finally result in at least a local optimum.


\subsection{Variational Expectation Maximization for NPs}
This part is to avoid the inference suboptimality of vanilla NPs as mentioned earlier.
We retain the neural architectures used in NPs.
The basic idea is illustrated in Fig. (\ref{em_fig}).
In detail, we iteratively construct the lower bound $\mathcal{L}(\vartheta_K)$ and maximize the surrogate function $\mathcal{L}(\vartheta;\vartheta_K)$.
The referred optimization gap is due to the complexity of objectives or the choice of optimizers and measures the difference between converged (local) optimal functional prior and the theoretical optimal functional prior.
The general pseudo code is Algorithm (\ref{vem_pseudo}).

\begin{algorithm}[H]
\SetAlgoLined
\SetKwInOut{Input}{Input}
\SetKwInOut{Output}{Output}
\Input{Task distribution $p(\mathcal{T})$;
 Task batch size, Number of particles, Initialized $\vartheta$ and $\eta$.}
\Output{Meta-trained parameters $\vartheta$ and $\eta$.}

\For{$k=1$ {\bfseries to} $K$}{

$\texttt{E}$-step \#1: $k\leftarrow k+1$ and reset the variational posterior $q_{\phi}(z)=p(z\vert\mathcal{D}_{\tau}^{T};\vartheta_k)$ in Eq. (\ref{lik_decomp})\;
\eIf{\text{Use the Functional Prior as the Proposal}}
{Reset $q_{\eta}(z\vert\mathcal{D}_{\tau}^{T})=p(z\vert\mathcal{D}_{\tau}^C;\vartheta_k)$;}
{$\texttt{E}$-step \#2: update the proposal $\eta_{k}=\arg\min_{\eta}\mathcal{L}_{\text{KL}}(\eta;\eta_{k-1},\vartheta_{k})$ in Eq. (\ref{mlmdn_kl}) according to operations in Appendix (\ref{append_opt_proposal});}
$\texttt{M}$-step: optimize surrogate functions $\vartheta_{k+1}=\arg\max_{\vartheta}\mathcal{L}_{\text{SI-NP}}(\vartheta;\eta_k,\vartheta_k)$ in Eq. (\ref{iw_obj})\;
}
\caption{Variational Expectation Maximization for NPs.}
\label{vem_pseudo}
\end{algorithm}

\subsubsection{Surrogate Function for Exact NPs}
To make the optimization of meta dataset log-likelihood feasible, we construct surrogate functions as the proxy in each iteration step.
These meta learning surrogate functions with special properties are closely connected with the original objective, and we leave this discussion in Appendix (\ref{formula_vem}).

Here $\vartheta_k$ denotes the parameter of the latent variable model for NPs in the $k$-th iteration of variational expectation maximization.
Following the Algorithm (\ref{vem_pseudo}), we take the $\texttt{E}$-step \#1 by replacing the approximate posterior in Eq. (\ref{vi_np_exact_obj}) with the last time updated $p(z\vert\mathcal{D}_{\tau}^{T};\vartheta_k)$.
And this results in the following equation,
\begin{equation}
    \begin{split}
        \mathcal{L}(\vartheta;\vartheta_k)=\mathbb{E}_{p(z\vert\mathcal{D}_{\tau}^{T};\vartheta_k)}\left[\ln p(\mathcal{D}_{\tau}^{T},z\vert \mathcal{D}_{\tau}^{C};\vartheta)-\ln p(z\vert\mathcal{D}_{\tau}^{T};\vartheta_k)\right]
    \end{split}
    \label{surrogate_mlmdns}
\end{equation}
where $p(z\vert\mathcal{D}_{\tau}^{T};\vartheta_k)$ is the posterior distribution.

\begin{proposition}
The proposed meta learning function $\mathcal{L}(\vartheta;\vartheta_k)$ in Eq. (\ref{surrogate_mlmdns}) is a surrogate function \textit{w.r.t.} the log-likelihood of the meta learning dataset.  
\end{proposition}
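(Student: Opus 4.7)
The plan is to verify that $\mathcal{L}(\vartheta;\vartheta_k)$ satisfies the two defining conditions of a minorize-maximize surrogate for $\mathcal{L}(\vartheta)$: (i) a global lower-bound property $\mathcal{L}(\vartheta;\vartheta_k)\leq \mathcal{L}(\vartheta)$ for all $\vartheta\in\Theta$, and (ii) tangency at the current iterate $\mathcal{L}(\vartheta_k;\vartheta_k)=\mathcal{L}(\vartheta_k)$. These are precisely the properties that justify the EM picture in Figure (\ref{em_fig}) and that imply the monotone improvement $\mathcal{L}(\vartheta_{k+1})\geq \mathcal{L}(\vartheta_k)$ whenever the $\texttt{M}$-step finds any $\vartheta_{k+1}$ with $\mathcal{L}(\vartheta_{k+1};\vartheta_k)\geq \mathcal{L}(\vartheta_k;\vartheta_k)$.

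First I would instantiate the generic variational identity Eq. (\ref{lik_decomp}) with the particular choice $q_{\phi}(z)=p(z\vert \mathcal{D}_{\tau}^{T};\vartheta_k)$, i.e. the \texttt{E}-step \#1 prescribed by Algorithm (\ref{vem_pseudo}). The decomposition then reads
\begin{equation*}
\mathcal{L}(\vartheta)=\mathbb{E}_{p(z\vert\mathcal{D}_{\tau}^{T};\vartheta_k)}\!\left[\ln\frac{p(\mathcal{D}_{\tau}^{T},z\vert\mathcal{D}_{\tau}^{C};\vartheta)}{p(z\vert\mathcal{D}_{\tau}^{T};\vartheta_k)}\right] + D_{KL}\!\left[p(z\vert\mathcal{D}_{\tau}^{T};\vartheta_k)\,\Vert\,p(z\vert\mathcal{D}_{\tau}^{T};\vartheta)\right].
\end{equation*}
The first term is exactly $\mathcal{L}(\vartheta;\vartheta_k)$ as defined in Eq. (\ref{surrogate_mlmdns}), and the second is a KL divergence and therefore non-negative by Gibbs' inequality. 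This yields the minorization bound in condition (i).

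Next I would check condition (ii) by evaluating the same decomposition at $\vartheta=\vartheta_k$: the KL term vanishes because both arguments coincide, leaving $\mathcal{L}(\vartheta_k;\vartheta_k)=\mathcal{L}(\vartheta_k)$. Finally, since $\mathcal{L}_{\mathcal{T}}(\vartheta)=\sum_{\tau\in\mathcal{T}}\mathcal{L}_\tau(\vartheta)$ decomposes additively over tasks in Eq. (\ref{meta_np_decomp})/(\ref{mlcn_obj}), summing the per-task inequality and equality over $\tau\in\mathcal{T}$ lifts both properties to the full meta learning log-likelihood, giving the claimed surrogate status.

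I do not expect a real technical obstacle here: the proposition is essentially a restatement of the classical EM majorization identity, and the whole argument rests on non-negativity of the KL divergence. The only care needed is making explicit that the posterior used in the \texttt{E}-step is frozen at $\vartheta_k$ (so that it plays the role of $q_\phi$ and cancels correctly at $\vartheta=\vartheta_k$), and that the surrogate form written in Eq. (\ref{surrogate_mlmdns}) — with $\ln p(\mathcal{D}_\tau^T,z\vert\mathcal{D}_\tau^C;\vartheta)$ rather than $\ln p(\mathcal{D}_\tau^T\vert z;\vartheta)$ — is the joint form whose expectation under the frozen posterior equals the exact ELBO of Eq. (\ref{vi_np_exact_obj}), not the vanilla-NP approximation of Eq. (\ref{vi_np_obj}) criticised in Remark (\ref{remark_np_suboptim}). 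The substantive work of the paper then lies not in this proposition but in making the $\texttt{M}$-step tractable via self-normalised importance sampling, which is deferred to the construction of $\mathcal{L}_{\text{SI-NP}}(\vartheta;\eta_k,\vartheta_k)$.
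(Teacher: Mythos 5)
Your proposal is correct and follows essentially the same route as the paper: the appendix proof likewise verifies the two MM-surrogate conditions via the identity $\ln p(\mathcal{D}_{\tau}^{T}\vert\mathcal{D}_{\tau}^{C};\vartheta)=\mathcal{L}(\vartheta;\vartheta_k)+D_{KL}\left[p(z\vert\mathcal{D}_{\tau}^{T};\vartheta_k)\parallel p(z\vert\mathcal{D}_{\tau}^{T};\vartheta)\right]$, using non-negativity of the KL divergence for the lower bound and its vanishing at $\vartheta=\vartheta_k$ for tangency. Your explicit per-task-to-meta-dataset summation and the cautionary remark about freezing the posterior at $\vartheta_k$ are consistent with, and slightly more explicit than, the paper's presentation.
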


The above proposition is examined based on the definition in Appendix (\ref{formula_vem_guarantee}).


\subsubsection{Tractable Optimization with Self-Normalized Importance Sampling}
Since the second term in Eq. (\ref{surrogate_mlmdns}) is constant in the iteration, we can drop it to simplify the surrogate objective as the right side of the following equation.

\begin{equation}
    \begin{split}
        \max_{\vartheta} \mathcal{L}(\vartheta;\vartheta_k)\Leftrightarrow
        \max_{\vartheta}\mathcal{L}_{\text{EM}}(\vartheta;\vartheta_k)
        =\mathbb{E}_{p(z\vert\mathcal{D}_{\tau}^{T};\vartheta_k)}\left[\ln p(\mathcal{D}_{\tau}^{T},z\vert \mathcal{D}_{\tau}^{C};\vartheta)\right]
    \end{split}
    \label{em_simple_obj}
\end{equation}

\begin{proposition}\label{conver_prop}
Optimizing this surrogate function of a batch of tasks via the variational expectation maximization leads to an improvement guarantee \textit{w.r.t.} the log-likelihood $\sum_{\tau\in\mathcal{T}}\ln p(\mathcal{D}_{\tau}^{T}\vert\mathcal{D}_{\tau}^{C};\vartheta)$.
\end{proposition}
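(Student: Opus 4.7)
\textbf{Proof plan for Proposition \ref{conver_prop}.}
The plan is to run the standard monotonic-ascent argument of the EM algorithm on a per-task basis, and then sum across the task batch. The key identity to exploit is the exact decomposition from Eq. (\ref{lik_decomp}),
\begin{equation*}
\mathcal{L}(\vartheta)=\mathbb{E}_{q(z)}\!\left[\ln\frac{p(\mathcal{D}_{\tau}^{T},z\vert \mathcal{D}_{\tau}^{C};\vartheta)}{q(z)}\right]+D_{KL}\!\left[q(z)\,\|\,p(z\vert\mathcal{D}_{\tau}^{T};\vartheta)\right],
\end{equation*}
which holds for any admissible $q(z)$. The first step is to instantiate $q(z)=p(z\vert\mathcal{D}_{\tau}^{T};\vartheta_{k})$ as prescribed by $\texttt{E}$-step \#1 of Algorithm (\ref{vem_pseudo}). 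Evaluated at $\vartheta=\vartheta_{k}$ the KL gap vanishes, so the ELBO coincides with $\mathcal{L}(\vartheta_{k})$, and for general $\vartheta$ it equals $\mathcal{L}(\vartheta;\vartheta_{k})$ of Eq. (\ref{surrogate_mlmdns}) up to the $\vartheta$-independent entropy $-\mathbb{E}_{p(z\vert\mathcal{D}_{\tau}^{T};\vartheta_{k})}\ln p(z\vert\mathcal{D}_{\tau}^{T};\vartheta_{k})$, which is exactly the constant discarded in passing to $\mathcal{L}_{\text{EM}}(\vartheta;\vartheta_{k})$ in Eq. (\ref{em_simple_obj}).

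Next, I would record the two minorization properties of the surrogate: (i) tightness at the current iterate, $\mathcal{L}(\vartheta_{k};\vartheta_{k})=\mathcal{L}(\vartheta_{k})$; and (ii) the global lower bound $\mathcal{L}(\vartheta;\vartheta_{k})\le\mathcal{L}(\vartheta)$ for every $\vartheta\in\Theta$, which follows from non-negativity of the KL gap in the decomposition above. Because $\mathcal{L}_{\text{EM}}$ differs from $\mathcal{L}(\vartheta;\vartheta_{k})$ only by an additive constant in $\vartheta$, the $\texttt{M}$-step update $\vartheta_{k+1}=\arg\max_{\vartheta}\mathcal{L}_{\text{EM}}(\vartheta;\vartheta_{k})$ is also the maximizer of $\mathcal{L}(\cdot;\vartheta_{k})$, giving $\mathcal{L}(\vartheta_{k+1};\vartheta_{k})\ge\mathcal{L}(\vartheta_{k};\vartheta_{k})$. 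Chaining (i), this inequality, and (ii) in order yields
\begin{equation*}
\mathcal{L}(\vartheta_{k+1})\;\ge\;\mathcal{L}(\vartheta_{k+1};\vartheta_{k})\;\ge\;\mathcal{L}(\vartheta_{k};\vartheta_{k})\;=\;\mathcal{L}(\vartheta_{k}),
\end{equation*}
i.e.\ the per-task log-likelihood is non-decreasing across iterations.

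Finally, since the meta objective in Eq. (\ref{mlcn_obj}) and the surrogate both decompose additively across $\tau\in\mathcal{T}$, summing the per-task inequality produces $\sum_{\tau}\ln p(\mathcal{D}_{\tau}^{T}\vert\mathcal{D}_{\tau}^{C};\vartheta_{k+1})\ge\sum_{\tau}\ln p(\mathcal{D}_{\tau}^{T}\vert\mathcal{D}_{\tau}^{C};\vartheta_{k})$, which is the improvement guarantee claimed. The main obstacle I expect is not the EM bookkeeping itself but the $\texttt{M}$-step: the posterior $p(z\vert\mathcal{D}_{\tau}^{T};\vartheta_{k})$ is intractable, so $\mathcal{L}_{\text{EM}}$ must be realized through the self-normalized importance weighted estimator built on the proposal $q_{\eta}$ (hence $\texttt{E}$-step \#2 and the objective of Eq. (\ref{iw_obj})). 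A clean resolution is to interpret SI-NP as a consistent Monte Carlo surrogate of $\mathcal{L}_{\text{EM}}$, so that the monotone ascent above holds in the exact (infinite-particle) limit where the argmax is attained; a finite-sample statement would then require an auxiliary stochastic-approximation argument that is orthogonal to the core EM structure and which I would flag rather than derive.
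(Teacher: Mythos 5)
Your proposal is correct and follows essentially the same route as the paper: the standard MM/EM argument built on the exact decomposition of Eq. (\ref{lik_decomp}) with $q(z)=p(z\vert\mathcal{D}_{\tau}^{T};\vartheta_{k})$, giving tightness at $\vartheta_{k}$, the global minorization $\mathcal{L}(\vartheta;\vartheta_{k})\le\mathcal{L}(\vartheta)$, and the chained inequalities $\mathcal{L}(\vartheta_{k})=\mathcal{L}(\vartheta_{k};\vartheta_{k})\le\mathcal{L}(\vartheta_{k+1};\vartheta_{k})\le\mathcal{L}(\vartheta_{k+1})$, summed over tasks (the paper phrases the last step by adding the nonnegative KL at $\vartheta_{k+1}$, which is the same fact). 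Your closing caveat about the self-normalized importance-sampling realization of the $\texttt{M}$-step is a fair observation that the paper itself leaves outside the formal guarantee.
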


Still we cannot optimize $\mathcal{L}_{\text{EM}}(\vartheta;\vartheta_k)$ 
since the expectation has no analytical solution and it is intractable to sample from $p(z\vert\mathcal{D}_{\tau}^{T};\vartheta_k)$ for Monte Carlo estimates\footnote{Though the exact posterior distribution $p(z\vert\mathcal{D}^T;\vartheta)$ can be inferred by Bayes rule, the denominator $\int p(\mathcal{D}_{\tau}^{T}\vert z;\vartheta)p(z\vert\mathcal{D}_{\tau}^{C};\vartheta)dz$ is not available.}.
Remember that the marginal distribution $p(\mathcal{D}_{\tau}^{T}\vert \mathcal{D}_{\tau}^{C};\vartheta_k)$ is task dependent and can not be ignored in computing the posterior $p(z\vert\mathcal{D}_{\tau}^{T};\vartheta_k)$.
To circumvent this, we introduce a proposal distribution $q_{\eta}(z\vert\mathcal{D}_{\tau}^{T})$ and optimize the objective via self-normalized importance sampling \citep{tokdar2010importance}.
The resulting meta learning surrogate function is as follows:
\begin{equation}
\begin{split}
    \mathcal{L}_{\text{EM}}(\vartheta;\vartheta_{k})=\mathbb{E}_{q_{\eta}}\left[\frac{p(z\vert\mathcal{D}_{\tau}^{T};\vartheta_k)}{q_{\eta}(z\vert\mathcal{D}_{\tau}^{T})}\ln p(\mathcal{D}_{\tau}^{T},z\vert \mathcal{D}_{\tau}^{C};\vartheta)\right]
    \approx
    \sum_{b=1}^{B}\hat{\omega}^{(b)}
    \ln p(\mathcal{D}_{\tau}^{T},z^{(b)}\vert\mathcal{D}_{\tau}^{C};\vartheta)\\
    =\sum_{b=1}^{B}\underbrace{\hat{\omega}^{(b)}}_{\text{Importance Weight}}
    \left[\ln\underbrace{p(\mathcal{D}_{\tau}^{T}\vert z^{(b)};\vartheta)}_{\text{Generative Likelihood}}+\ln\underbrace{p(z^{(b)}\vert\mathcal{D}_{\tau}^{C};\vartheta)}_{\text{Functional Prior Likelihood}}\right]
    =\mathcal{L}_{\text{SI-NP}}(\vartheta;\eta_k,\vartheta_{k})
    \label{iw_obj}    
\end{split}
\end{equation}

where $z^{(b)}\sim q_{\eta_k}(z\vert\mathcal{D}_{\tau}^{T})$, $\omega^{(b)}=\exp\left(\ln p(\mathcal{D}_{\tau}^{T}\vert z^{(b)};\vartheta_k)+\ln p(z^{(b)}\vert\mathcal{D}_{\tau}^{C};\vartheta_k)-\ln q_{\eta_k}(z^{(b)}\vert\mathcal{D}_{\tau}^{T})\right)$ and $\hat{\omega}^{(b)}=\frac{\omega^{(b)}}{\sum_{b^{\prime}=1}^{B}\omega^{(b^{\prime})}}$.

In terms of the first conditional term in Eq. (\ref{iw_obj}), all the data points are conditional independent and this is further expressed as $\ln p(\mathcal{D}_{\tau}^{T}\vert z^{(b)};\vartheta)=\sum_{i=1}^{n+m}\ln p(y_{i}\vert [x_{i},z^{(b)}];\vartheta)$.
In practice, the selection of proposal distributions is empirically tricky, so we make the update of proposal distributions optional in implementations.
In our experimental settings, we simply use the functional prior $p(z\vert\mathcal{D}_{\tau}^C;\vartheta)$ as the default proposal distribution, which is competitive enough in performance.

\begin{proposition}\label{one_sample_prop}
With one Monte Carlo sample used in Eq. (\ref{iw_obj}), the presumed diagonal Gaussian prior $p(z\vert\mathcal{D}_{\tau}^C;\vartheta)$ will collapse into a Dirac delta distribution in convergence. 
In this case, SI-NP with the one sample Monte Carlo estimate in Eq. (\ref{sinp_entropy}) is equivalent with CNP in Eq. (\ref{cnp_obj}).

\begin{equation}
    \begin{split}
        \mathcal{L}_{\text{SI-NP}}(\vartheta;\eta_k,\vartheta_{k})
        \approx
        \mathbb{E}_{p(z\vert\mathcal{D}_{\tau}^{C};\vartheta_k)}\left[\ln\underbrace{p(\mathcal{D}_{\tau}^{T}\vert z;\vartheta)}_{\text{Generative Likelihood}}\right]
        +\underbrace{\mathbb{E}_{p(z\vert\mathcal{D}_{\tau}^{C};\vartheta_k)}\left[\ln p(z\vert\mathcal{D}_{\tau}^{C};\vartheta)\right]}_{\text{Prior Collapse Term}}
    \end{split}
    \label{sinp_entropy}
\end{equation}
\end{proposition}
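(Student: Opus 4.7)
\textbf{Proof proposal for Proposition \ref{one_sample_prop}.}
The plan is to unpack Eq.~(\ref{iw_obj}) at $B=1$ under the default choice of proposal $q_{\eta_k}=p(z\vert\mathcal{D}_{\tau}^{C};\vartheta_{k})$, identify the two summands, and then analyze which parameter configuration maximizes the second summand at convergence. First I would observe that the self-normalized weights collapse trivially: with a single draw $z^{(1)}\sim q_{\eta_k}$,
\begin{equation*}
\hat{\omega}^{(1)}=\frac{\omega^{(1)}}{\sum_{b'=1}^{1}\omega^{(b')}}=1,
\end{equation*}
so the entire importance-weighting machinery disappears and only the log joint $\ln p(\mathcal{D}_{\tau}^{T}\vert z^{(1)};\vartheta)+\ln p(z^{(1)}\vert\mathcal{D}_{\tau}^{C};\vartheta)$ remains. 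Taking the expectation of this Monte Carlo estimator over the proposal $p(z\vert\mathcal{D}_{\tau}^{C};\vartheta_k)$ recovers exactly the two-term decomposition in Eq.~(\ref{sinp_entropy}), so the ``generative likelihood'' part and the ``prior collapse term'' are separated cleanly.

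Next I would analyze the behavior of the prior collapse term at a fixed point of Algorithm~\ref{vem_pseudo}, where by definition $\vartheta_{k}=\vartheta$. In that regime the second summand becomes
\begin{equation*}
\mathbb{E}_{p(z\vert\mathcal{D}_{\tau}^{C};\vartheta)}\!\left[\ln p(z\vert\mathcal{D}_{\tau}^{C};\vartheta)\right]=-H\!\left[p(z\vert\mathcal{D}_{\tau}^{C};\vartheta)\right]=-\tfrac{1}{2}\sum_{i=1}^{d}\ln(2\pi e\,\sigma_{i}^{2}),
\end{equation*}
using that the prior is a diagonal Gaussian $\mathcal{N}(z;\mu_{\vartheta}(\mathcal{D}_{\tau}^{C}),\mathrm{diag}[\sigma_1^2,\dots,\sigma_d^2])$. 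Because $-H$ is monotonically increasing as each $\sigma_{i}\to 0^{+}$ and in fact diverges to $+\infty$, the maximizer of the surrogate (up to the bounded contribution from the generative-likelihood term, which stays finite as the prior concentrates on $\mu_{\vartheta}(\mathcal{D}_{\tau}^{C})$) drives $\mathrm{Tr}[\Sigma_{\vartheta}(\mathcal{D}_{\tau}^{C})]\to 0$. This is precisely the prior-collapse condition of Definition~\ref{def_prior_collapse}, i.e.\ $p(z\vert\mathcal{D}_{\tau}^{C};\vartheta)\Rightarrow\delta\bigl(|z-\hat z|\bigr)$ with $\hat z=\mu_{\vartheta}(\mathcal{D}_{\tau}^{C})$.

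Finally I would substitute this degenerate prior back into the first summand of Eq.~(\ref{sinp_entropy}). Since the expectation under a Dirac reduces to evaluation at $\hat{z}$, the surviving objective becomes $\ln p(\mathcal{D}_{\tau}^{T}\vert\hat{z};\vartheta)$, which coincides verbatim with $\mathcal{L}_{\text{CNP}}(\vartheta)$ in Eq.~(\ref{cnp_obj}) under the same Dirac assumption on the functional prior. Combined with the earlier steps this yields the claimed equivalence.

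The main obstacle is the second step: controlling the competition between the bounded generative-likelihood term and the unbounded negative-entropy term, and then justifying the limiting argument rigorously enough to conclude a \emph{genuine} collapse rather than merely a monotone tendency. I would handle this by noting that, for any fixed $\vartheta_{k}$, the generative-likelihood contribution $\mathbb{E}_{p(z\vert\mathcal{D}_{\tau}^{C};\vartheta_k)}[\ln p(\mathcal{D}_{\tau}^{T}\vert z;\vartheta)]$ is bounded above by $\ln p(\mathcal{D}_{\tau}^{T}\vert\mu_{\vartheta}(\mathcal{D}_{\tau}^{C});\vartheta)$ up to an $O(\sigma^2)$ curvature correction, whereas $-H$ grows like $-\sum_i\ln\sigma_i$; since no finite $\sigma_i>0$ can be optimal, the only stationary configuration lies on the Dirac boundary, which is the precise sense in which the proposition should be stated.
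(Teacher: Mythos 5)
Your proposal is correct and, at its core, follows the same route as the paper: you note that with $B=1$ the self-normalized weight is identically $1$, split the one-sample objective of Eq.~(\ref{iw_obj}) into the generative term plus the log-prior (``collapse'') term of Eq.~(\ref{sinp_entropy}), argue that the collapse term is maximized only in the limit $\sigma_i\to 0^{+}$ while the generative term stays bounded, and recover $\mathcal{L}_{\text{CNP}}$ of Eq.~(\ref{cnp_obj}) by evaluating at the resulting Dirac prior. The difference is in how the limit is handled. The paper works at the level of the single reparameterized draw $\hat z=\mu_{\vartheta}+\hat\epsilon\Sigma_{\vartheta}^{1/2}$, writes the collapse term as $-\ln\sigma_i-(\mu_i-\hat z_i)^2/(2\sigma_i^2)$ (Eq.~(\ref{prior_collapse_obj})--(\ref{one_mc_sinp})), and resolves the competition between the two pieces with an explicit L'H\^opital computation; you instead evaluate the term in expectation at the self-consistent point $\vartheta_k=\vartheta$, obtaining the closed-form negative Gaussian entropy $-\tfrac12\sum_i\ln(2\pi e\,\sigma_i^2)$, whose divergence as $\sigma_i\to 0^{+}$ is immediate. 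Your version is arguably cleaner: in expectation the quadratic piece is the constant $\tfrac12$, so the delicate balancing the paper handles via L'H\^opital (which, read literally with $\kappa_i$ held fixed, is not even an indeterminate form and only becomes sound once one uses $\hat z_i-\mu_i=\hat\epsilon_i\sigma_i$ from the reparameterization) disappears automatically. One small point to tighten: your sentence ``the maximizer of the surrogate drives $\mathrm{Tr}[\Sigma_{\vartheta}(\mathcal{D}_{\tau}^{C})]\to 0$'' should make explicit that the unboundedness argument applies to the objective in which the prior appearing in the expectation and the prior being optimized are tied together (gradients flowing through the current prior, as in the paper's Eq.~(\ref{one_mc_sinp})); under a strictly frozen-$\vartheta_k$ reading the cross-entropy term alone is maximized by copying $p(z\vert\mathcal{D}_{\tau}^{C};\vartheta_k)$ rather than by collapsing, a subtlety that affects the paper's own presentation equally.
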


The \textbf{Proposition} (\ref{one_sample_prop}) establishes connections between SI-NPs and CNPs in optimization and we attribute this to the collapse term in Eq. (\ref{sinp_entropy}).
Hence, CNP can be viewed as a particular example in SI-NPs.
The complete proof is based on limit analysis and can be found in Appendix (\ref{append_proof_prior_collapse}).

\subsection{Scalable Training and Testing}

As shown in \textbf{Algorithm} (\ref{vem_pseudo}), the meta training process consists of two steps.
We skip $\texttt{E}$-step \#2 to avoid unstable optimization observed in empirical results.
By repeating $\texttt{E}$-step \#1/$\texttt{M}$-step iterations until convergence, the method can theoretically find at least a local optimal \textit{w.r.t.} the log-likelihood of meta learning dataset based on the \textbf{Proposition} (\ref{conver_prop}).

Once the learning progress reaches the final convergence, we can make use of the learned functional prior to obtain the predictive distribution.
With $B$ particles in prediction, the distribution can be expressed as follows.
\begin{equation}
    \begin{split}
        p(y\vert x,\mathcal{D}_{\tau}^{C};\vartheta)
        =\mathbb{E}_{p(z\vert\mathcal{D}_{\tau}^{C};\vartheta)}p(y\vert [x,z];\vartheta)
        \approx
        \frac{1}{B}\sum_{b=1}^{B}p(y\vert [x,z^{(b)}];\vartheta)
        \
        \text{with}
        \
        z^{(b)}\sim p(z\vert\mathcal{D}_{\tau}^{C};\vartheta)
    \end{split}
\end{equation}

\section{Experiments and Analysis}\label{exp}
In this section, two central questions are answered: 
(i) can variational EM based models SI-NPs, achieve a better local optimum than vanilla NPs?
(ii) what is the role of randomness in functional priors?
Specifically, we examine the influence of NPs optimization objectives on typical downstream tasks and understand the functional prior quantitatively.

\textbf{Baselines \& Evaluations.}
Since our concentration is on optimization objectives in NPs family, we compare to NP \citep{garnelo2018neural}, and CNP \citep{garnelo2018conditional}, ML-NP \citep{foong2020meta} in experiments.
Note that our developed SI-NP and ML-NP \citep{foong2020meta} are importance weighted models, but the mechanisms in estimating weights are significantly different.
As for evaluations, we refer the reader to Sec. (\ref{infersub_eval}) for more information.
Due to the page limit, additional experimental results are attached in Appendix (\ref{append_aug_ind_bias}), which shows that SI-NPs can achieve SOTA performance by adding attention networks.

\subsection{Synthetic Regression}
\begin{figure}[h]
\vspace{-10.0pt}
\centering
\includegraphics[width=1.0\textwidth]{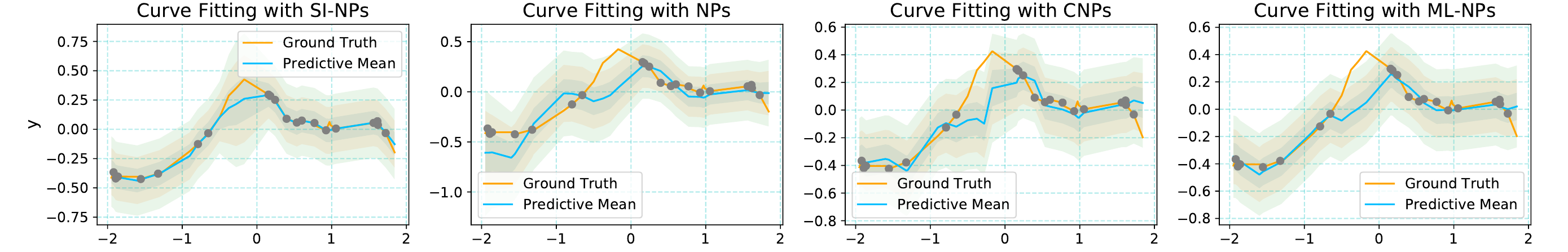}
\caption{\textbf{Examples of Curve Fitting in RBF Kernel Cases.} 
The plots report predictive mean functions with $\pm 3$ standard deviations.
}
\vspace{-5.0pt}
\label{demo_gp_fig}
\end{figure}

We create synthetic regression tasks by sampling functions from Gaussian processes.
Three types of kernels, respectively Matern$-\frac{5}{2}$, RBF, and Periodic, are used to generate diverse function distributions.
In each iteration, a batch of data points from functions is randomly processed into the context and the target for training models.

In meta testing, the same strategy is used to generate functions and data points.
We report the average log-likelihood results in Table (\ref{gp_table_results}).
It shows SI-NPs outperform the vanilla NPs in all kernel cases and are at least competitive with other baselines. 
ML-NPs are also superior to NPs, but
they cannot beat CNPs in RBF and Periodic cases.
An illustration of fitted curves is given in Fig. (\ref{demo_gp_fig}): compared to vanilla NPs, we notice that SI-NPs and ML-NPs can better match the ground truth and capture the uncertainty between context data points. 

\setlength{\tabcolsep}{20.0pt}
\begin{table}[h]
\vspace{-10.0pt}
\begin{small}
  \begin{center}
    \caption{Test average log-likelihoods of target data points for 1-dimensional Gaussian process dataset with various kernels (reported standard deviations in 5 runs).
    For each run, we randomly sample 1000 functions as tasks to evaluate.}
    \label{gp_table_results}
    \begin{tabular}{c|c|c|c|c|}
      \toprule 
      \# &\multicolumn{1}{c}{\text{Matern} $-\frac{5}{2}$} &\multicolumn{1}{c}{\text{RBF}} &\multicolumn{1}{c}{\text{Periodic}}  \\
      \toprule 
      $\mathcal{L}_{\text{GP}}$ (Oracle) &0.821\tiny{$\pm$0.03} &1.18\tiny{$\pm$0.013} &0.833\tiny{$\pm$0.017}  \\
      \midrule
      $\mathcal{L}_{\text{NP}}$ \citep{garnelo2018neural} &-0.225\tiny{$\pm$0.03} &-0.183\tiny{$\pm$0.03} &-0.611\tiny{$\pm$0.034}  \\
      $\mathcal{L}_{\text{CNP}}$ \citep{garnelo2018conditional} &0.295\tiny{$\pm$0.017} &0.463\tiny{$\pm$0.023} &-0.533\tiny{$\pm$0.009}  \\
      $\mathcal{L}_{\text{ML-NP}}$ \citep{foong2020meta} &0.303\tiny{$\pm$0.013} &0.439\tiny{$\pm$0.009} &-0.547\tiny{$\pm$0.036} \\
      \toprule 
      $\mathcal{L}_{\text{SI-NP}}$ (ours)  &0.305\tiny{$\pm$0.006} &0.493\tiny{$\pm$0.007} &-0.532\tiny{$\pm$0.036} \\
      \bottomrule 
    \end{tabular}
  \end{center}
\end{small}
\vspace{-10.0pt}
\end{table}

\subsection{Image Completion}

Similar to experiments in \citep{garnelo2018neural,kim2019attentive}, we perform image completion experiments in this section.
We implement NPs baselines in four commonly used datasets, namely MNIST \citep{bottou1994comparison}, FMNIST \citep{xiao2017fashion}, CIFAR10 \citep{krizhevsky2009learning} and SVHN \citep{sermanet2012convolutional}.
During the meta training, the goal is to complete images in which some pixels have been randomly masked out.
More precisely, we learn a latent variable model that maps all pixel locations $x\in[0,1]^2$ to Gaussian distribution parameters of pixel values based on the context pixel locations and values.
Fig. (\ref{sample_image_vis}) is an example of completion results from sampled images.
For implementation details, please refer to Appendix (\ref{append_implement}). 
\setlength{\tabcolsep}{3.8pt}
\begin{table}[h]
\vspace{-10.0pt}
\begin{small}
  \begin{center}
    \caption{Test average log-likelihoods with reported standard deviations for image completion in MNIST/FMNIST/SVHN/CIFAR10 (5 runs). 
    We test the performance of different optimization objectives in both context data points and target data points.
    Except CNPs, we use 32 Monte Carlo samples from the functional prior to evaluate the average log-likelihoods.
    }
    \label{test_image_table}
    \begin{tabular}{c|cc|cc|cc|cc|cc|}
      \toprule 
       &\multicolumn{2}{c}{\text{MNIST}} &\multicolumn{2}{c}{\text{FMNIST}} &\multicolumn{2}{c}{\text{SVHN}} &\multicolumn{2}{c}{\text{CIFAR10}} \\
      \# &context &target &context &target &context &target &context &target  \\
      \toprule 
      $\mathcal{L}_{\text{NP}}$ &0.81\tiny{$\pm$0.006} &0.73\tiny{$\pm$0.007} &0.83\tiny{$\pm$0.007} &0.73\tiny{$\pm$0.009} &3.19\tiny{$\pm$0.02} &3.07\tiny{$\pm$0.02} &2.35\tiny{$\pm$0.04} &2.03\tiny{$\pm$0.02} \\
      $\mathcal{L}_{\text{CNP}}$ &1.05\tiny{$\pm$0.005} &0.99\tiny{$\pm$0.008} &0.95\tiny{$\pm$0.007} &0.90\tiny{$\pm$0.009} &\textbf{3.57}\tiny{$\pm$0.003} &3.48\tiny{$\pm$0.004} &2.71\tiny{$\pm$0.004} &2.53\tiny{$\pm$0.006} \\
      $\mathcal{L}_{\text{ML-NP}}$ &1.06\tiny{$\pm$0.004} &0.99\tiny{$\pm$0.006} &0.94\tiny{$\pm$0.008} &0.89\tiny{$\pm$0.007} &3.51\tiny{$\pm$0.008} &3.43\tiny{$\pm$0.006} &2.60\tiny{$\pm$0.005} &2.41\tiny{$\pm$0.005} \\
      \toprule 
      $\mathcal{L}_{\text{SI-NP}}$ (ours)  &\textbf{1.09}\tiny{$\pm$0.006} &\textbf{1.02}\tiny{$\pm$0.004} &\textbf{0.98}\tiny{$\pm$0.004} &\textbf{0.94}\tiny{$\pm$0.005} &\textbf{3.57}\tiny{$\pm$0.003} &\textbf{3.50}\tiny{$\pm$0.003} &\textbf{2.75}\tiny{$\pm$0.004} &\textbf{2.60}\tiny{$\pm$0.005} \\
      \bottomrule 
    \end{tabular}
  \end{center}
\end{small}
\vspace{-10.0pt}
\end{table}

\begin{wrapfigure}{r}{0.35\textwidth}
\vspace{-10.0pt}
  \begin{center}
    \includegraphics[width=0.35\textwidth]{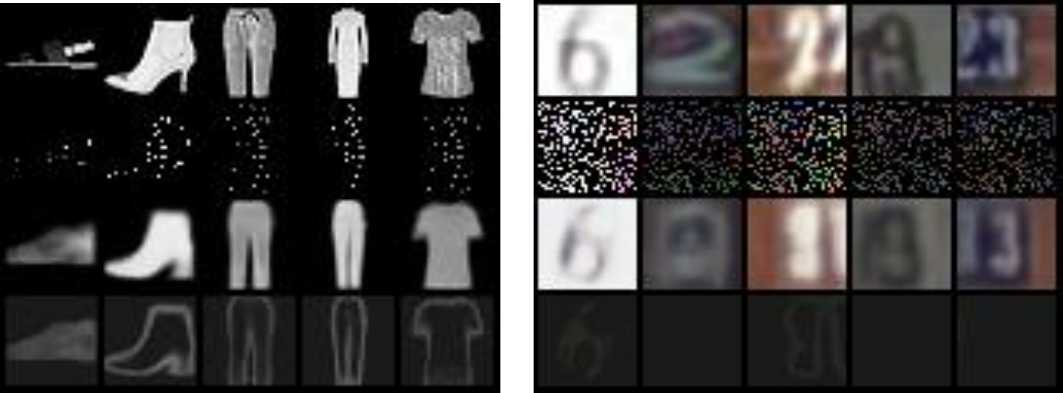}
  \end{center}
  \caption{\textbf{SI-NP Completed Images.}
  From top to bottom in rows are original images, context points, learned predictive means and variances of sampled images.}
  \vspace{-15.0pt}
  \label{sample_image_vis}
\end{wrapfigure}
In the evaluation, the number of context pixels is randomly selected for each image in the dataset.
We examine the performance in a testing set of the above image datasets and report the average log-likelihoods in Table (\ref{test_image_table}).
We can see that SI-NP achieves the best performance in all image datasets.
The performance gaps between SI-NPs and NPs are remarkable.
Furthermore, the asymptotic behaviors of all baselines are illustrated in Fig. (\ref{asym_image_fig}).
All models exhibit asymptotic behaviors by varying the number of context points, and NPs mainly result in clearly lower log-likelihoods with 10 context points.
The observations with different number of context pixels are consistent with the conclusion in Table (\ref{test_image_table}).

\begin{figure}[h]
\centering
\includegraphics[width=1.0\textwidth]{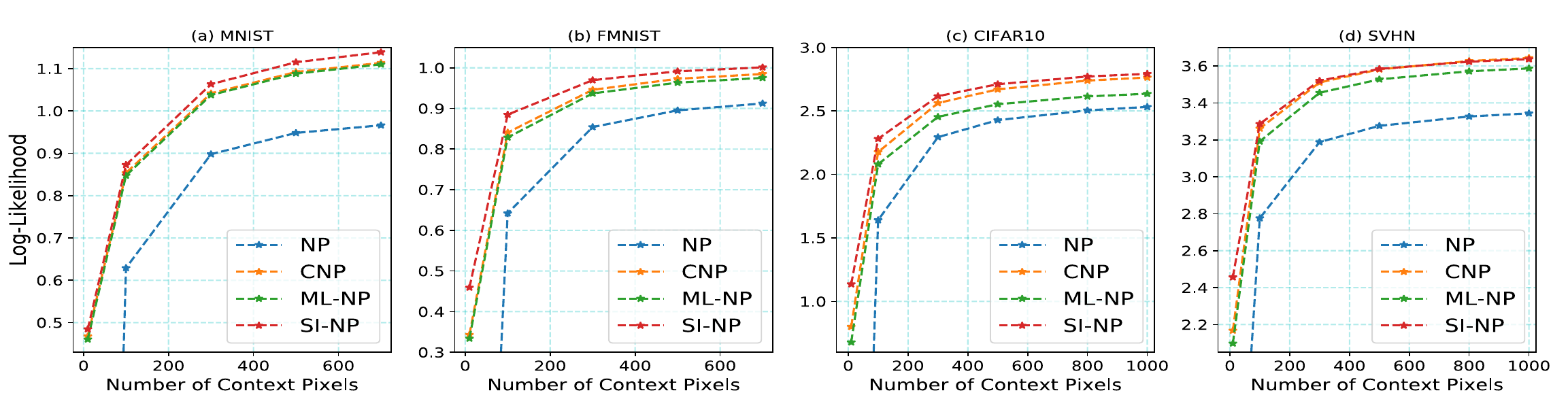}
\caption{\textbf{Asymptotic Performance in Image Completion.} We meta test pixel average log-likelihoods with varying number of context points in image datasets.
Context points are randomly selected for each image in testing processes.
For MNIST/FMNIST datasets, the numbers of context pixels in testing are $\{10, 100, 300, 500, 700\}$. 
For CIFAR10/SVHN datasets, the numbers of context pixels in testing are $\{10, 100, 300, 500, 800, 1000\}$. 
}
\label{asym_image_fig}
\end{figure}

Another critical finding is reported in Fig. (\ref{lv_statistics_vis}), which examines whether the learned functional priors collapse into the deterministic ones.
This is based on the average computed trace of covariance matrices in learned functional priors.
In MNIST dataset, SI-NPs encounter the prior collapse, and ML-NPs also obtain extremely lower trace values of covariance matrices.
We attribute the prior collapse in MNIST to its most superficial structures and limited semantic diversity, which results in lower prior uncertainty.
For image datasets with more complicated semantics, the computed trace values in SI-NPs retain a reasonable interval.
The scale of SI-NPs' trace values coincides with the semantics complexity of image datasets: CIFAR10$>$SVHN$>$FMNIST$>$MNIST.
Another common observation is that with context points increasing, both models' trace values of functional priors in most datasets decrease to a certain level. 

The above reveals the meaning of the learned functional priors in SI-NPs: The functional prior exhibits higher uncertainty with more complicated semantics. 
Hence, it has the potential of generating functional samples with more diversity.
For datasets with less rich semantics, the randomness of the learned functional priors plays a less critical role, and the deterministic functional representation is capable of function generation.
Also, with more observations, the randomness of functional priors can be reduced.

\begin{figure}[h]
\vspace{-20.0pt}
\centering
\includegraphics[width=0.95\textwidth]{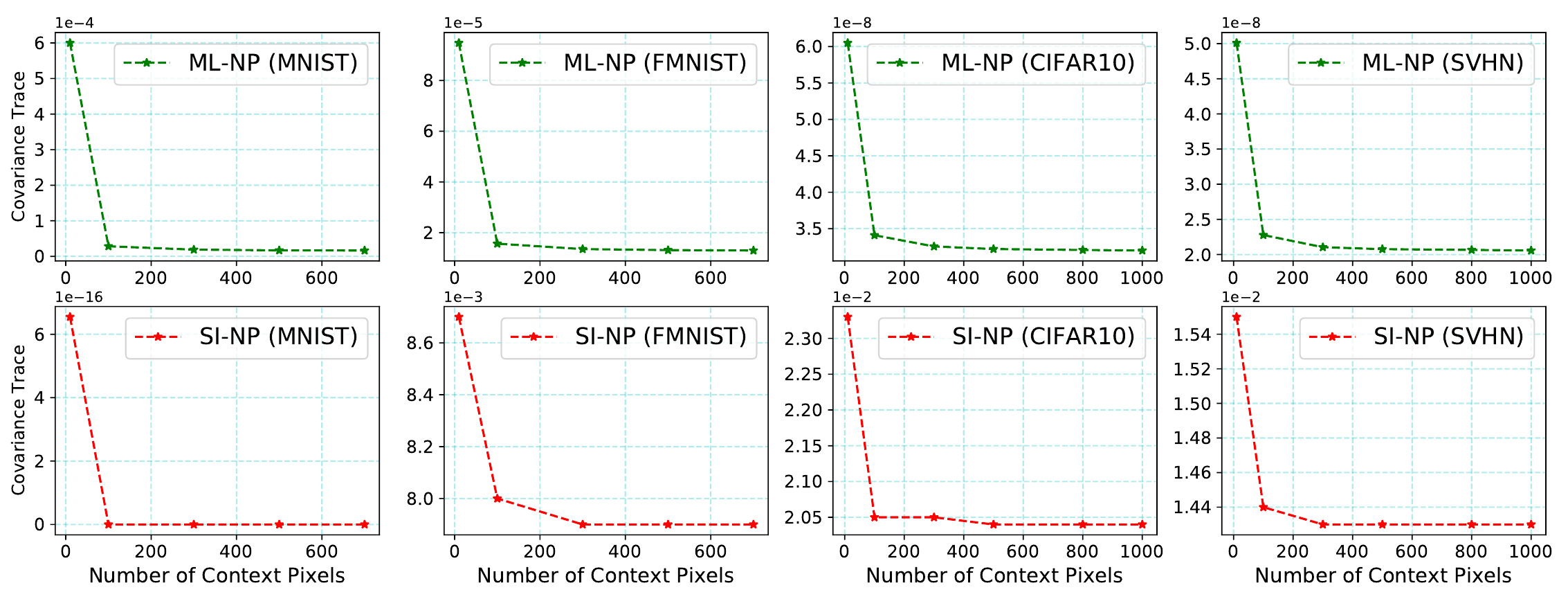}
\caption{\textbf{Statistics of Learned Functional Priors in ML-NPs/SI-NPs.}
In meta testing, we still vary the number of context points in image datasets.
The trace of learned functional priors' covariance matrices $\text{Tr}[\Sigma_{\vartheta}(\mathcal{D}_{\tau}^{C})]$ is computed based on $p(z\vert\mathcal{D}_{\tau}^{C};\vartheta)=\mathcal{N}(z;\mu_{\vartheta}(\mathcal{D}_{\tau}^{C}),\Sigma_{\vartheta}(\mathcal{D}_{\tau}^{C}))$.
}
\label{lv_statistics_vis}
\vspace{-15.0pt}
\end{figure}

\section{Related Work}
\textbf{NPs Family.}
NPs are simple and flexible in model formulations, but they may suffer from underfitting. 
Previous research focuses more on the use of structural inductive biases in NPs.
\citep{kim2019attentive,kim2021neural} improves the predictive performance by adding attention based local variables.
Translation equivariance and invariance are incorporated in modeling NPs with help of convolutions \citep{gordon2019convolutional,foong2020meta,kawano2020group}.
To find more compact functional representations, the contrastive loss is used to regularize (C)NPs' training \citep{gondal2021function}.
Hierarchical and mixture structures are also explored to induce diverse latent variables in NPs \citep{wang2020doubly,wanglearning}.
\cite{lee2020bootstrapping} combines boostrapping tricks with neural processes to improve expressiveness of latent variables.
Besides, NPs are applied to address sequential decision-making problems \citep{galashov2019meta,wang2022model}.
A summary of these models is given in Appendix (\ref{append_summary_nps}).

\textbf{Expectation Maximization \& Wake-Sleep Algorithms.}
For log-likelihood maximization problems with incomplete observations, expectation maximization \citep{bishop2006pattern,balestriero2020analytical} is a tractable approach in optimization.
It consists of an $\texttt{E}$-step to optimize the distribution of latent variables and a $\texttt{M}$-step to maximize the likelihood parameters.
The optimization of VAEs and variants is also built upon an EM framework \citep{ghojogh2021factor,gao2021deepgem}.
Our developed algorithm can be interpreted as EM for NPs.
Another family of algorithms related to our method is wake-sleep algorithms \citep{bornschein2014reweighted,eslami2016attend,dieng2019reweighted,le2020revisiting}, where self-normalized importance sampling is used in Monte Carlo estimates.
In this case, our method can be also viewed as the extension of the reweighted wake-sleep (RWS) algorithm to NPs with an improvement guarantee.
Another difference lies in that optimizing a learnable functional prior is of most importance in NPs, while RWS algorithms are mostly used in scenarios with a fixed prior.

\section{Conclusion}

\textbf{Technical Discussions.} 
In this paper, we study NPs family from an optimization objective perspective and analyze the inference suboptimality of vanilla NPs. 
Within the variational expectation maximization framework, our developed SI-NP improves the target likelihood step by step to obtain a more optimal functional prior.
Besides, experimental results tell us the learned functional prior has close connections with the number of context points and complexity of function families.

\textbf{Existing Limitations.} 
Compared to vanilla NPs, SI-NP requires more than one Monte Carlo sample from the functional prior in training, which consumes more computations.
Though intuitively, the uncertainty of the functional prior can be forward propagated to the output distribution, the influence of such uncertainty has not been mathematically studied.  

\textbf{Future Extensions.} 
The SI-NP objective is regardless of neural architectures so that any structural inductive bias can be incorporated in modeling.
The combination can theoretically produce superior functional representation baselines in the domain (Please refer to Appendix (\ref{append_aug_ind_bias}) for extensive experiments).

\section*{Acknowledgement}
We thank Dharmesh Tailor for the helpful discussions.
We thank Eric Nalisnick and Tim Bakker for the helpful feedback in the AMLab Seminar talks.

\bibliography{iclr2023_conference}
\bibliographystyle{iclr2023_conference}

\appendix

\newpage

\tableofcontents

\newpage

\section{Frequently Asked Questions}

In this section, we list frequently asked questions from researchers who help proofread this manuscript.
These raised questions are crucial, so we include more detailed discussions here.
Meanwhile, we make extra efforts for readers to quickly understand our work and make slides as follows \url{https://anonymous.4open.science/r/SI_NPs_Slides-7C94/iclr_submission_slides.pdf}.

\textbf{Learnable Proposal Distributions.}
The reason why we set the update of learnable proposal distribution an optional choice is that we find it difficult to balance the optimization of Eq. (\ref{iw_obj}) and Eq. (\ref{weight_proposal_obj_append}).
This is non-trivial and assigning equal weights in optimization results in unstable performance in experiments (Please refer to Section (\ref{append_exp_learn_proposal}) for more discussion.).
So, we leave the improvement of the proposal distribution future research.
Suppose a reasonable objective or at least a weight scheduling mechanism for two objectives in two $\texttt{E}$-steps can be developed to stabilize the training. In that case, SI-NPs are likely to achieve even better performance.

\textbf{Influence of Functional Prior Collapse.}
Generally, the generated functions are more diverse when the functional prior does not collapse: More randomness can be reflected from the functional prior, and the uncertainty of partial observation can be propagated to the output space.
However, in experiments, we observe that for functions with more superficial structures, \textit{e.g.} MNIST dataset, the deterministic functional representation in CNPs is sufficient to generate functions of high quality.
So there is no consensus on the influence of the functional prior collapse on the generation performance.

\textbf{Influence of the Number of Inference Particles in SI-NPs.}
With more inference particles (latent variables), SI-NPs can avoid the prior collapse in some cases, but the cost is expensive for computations.
To trade-off performance and computations, we set the required number of particles for meta training in a small quantity, \textit{e.g.} 8 particles for image completion tasks.

\textbf{Connections between Different Optimization Objectives.}
In Table (\ref{append_meta_obj}), we summarize the connections of different objectives with the meta dataset log-likelihood $\mathcal{L}(\vartheta)$.
Note that $\mathcal{L}_{\text{NP}}(\vartheta,\phi)$ is not the exact ELBO of $\mathcal{L}(\vartheta)$.
$\mathcal{L}_{\text{ML-NP}}(\vartheta)$ can also be viewed as the importance weighted method with equal weights for all particles.
Since the estimates of importance weights in SI-NPs exploit the target observations and consider the difference in particles, this helps improve the model's generalization capability.
Though the self-normalized importance sampling makes SI-NPs' estimates over $\mathcal{L}(\vartheta)$ biased, there is a probabilistic convergence guarantee towards the true integral quantity of our interest (Please refer to \textbf{Chapter 9} in the Book "\textbf{Monte Carlo Theory, Methods and Examples}" \citep{mcbook2013}).

\textbf{Combination with Structural Inductive Biases.}
The primary research focus is on the optimality of the different optimization objectives, and this study applies to more general stochastic processes (stationary or nonstationary cases).
Theoretically, the objective of SI-NPs can be combined with most structural inductive biases in Table (\ref{np_survey}), especially when the function family is complicated.
For example, the scale parameter of Matern kernels also varies with kernel values, bringing more variations in realizations. In this case, we guess a global latent variable is difficult to handle: SI-NPs and ML-NPs might encounter a similar performance bottleneck from the empirical observations in the main paper.
To further improve the performance, we take the attention module \citep{kim2019attentive} as an example to conduct extensive experiments in Section (\ref{append_aug_ind_bias}).
Since the meta learning experiment is computationally expensive and time-consuming in training processes, we do not examine combinations with other inductive biases in this paper.

\textbf{GP Oracles in Synthetic Regression.}
Note that the GP oracle is computed in the form
$\ln p(y_{1:n+m}\vert y_{1:n},x_{1:n+m})$, where the predictive distribution is mainly with a non-diagonal covariance matrix (suggesting $y_{n+m}$ not independent in statistics).
In comparison, the log-likelihood of NPs family is computed in the form
$\sum_{i=1}^{n+m}\ln p(y_i\vert [x_i,z];\vartheta)$, where $y_{1:n+m}$ is conditional independent \textit{w.r.t.} the global latent variable $z$.
Such a difference in computations causes the mentioned gap.
However, with a large neural model, such as the integration of attention networks in Section (\ref{append_aug_ind_bias}), the quantified performance gap of GP Oracles and the log-likelihood of NPs family is well reduced.

\begingroup
\setlength{\tabcolsep}{5.5pt}
\begin{table}[h!]
  \begin{center}
    \caption{A Summary of Optimization Objectives in NPs Family.
    We list the available optimization objectives in Section (\ref{infersub_sec})/(\ref{method}).
    For Importance Weighted Estimates, multiple Monte Carlo samples are required in meta training.
    }
    \label{append_meta_obj}
    \begin{tabular}{c|c|c|}
      \toprule 
      Optimization Objective &Connection with $\mathcal{L}(\vartheta)$ in Eq. (\ref{mlcn_single_obj}) &Importance Weighted Estimates  \\
      \toprule 
      $\mathcal{L}_{\text{NP}}(\vartheta,\phi)$ & Approximate ELBO  & \xmark \\
      $\mathcal{L}_{\text{CNP}}(\vartheta)$ & Biased Estimate  & \xmark \\
      $\mathcal{L}_{\text{ML-NP}}(\vartheta)$ & Biased Estimate & \cmark \\
      \midrule 
      $\mathcal{L}_{\text{SI-NP}}$ (Ours) & Biased Estimate & \cmark  \\
      \bottomrule 
    \end{tabular}
  \end{center}
\end{table}
\endgroup

\section{Probabilistic Generative Process in NPs}

\begin{definition}[Exchangeable Stochastic Processes]
We denote a probability space of functions by  $(\Omega,\mathcal{F},\mathbb{P})$.
Let $\nu_{x_{1},\dots,x_{N}}$ be a probability measure on $\mathbb{R}^{d}$ with $\{x_1,\dots,x_N\}$ a finite index set.
The process is called an exchangeable stochastic process $\mathcal{S}:X\times\Omega\to\mathbb{R}^{d}$ such that $\nu_{x_{1},\dots,x_{N}}(F_{1}\times\dots\times F_{N})=$
$\mathbb{P}(\mathcal{S}_{x_1}\in F_{1},\dots,\mathcal{S}_{x_N}\in F_{N})$ if it satisfies exchangeable consistency and marginalization consistency. 
\end{definition}

Here we can translate the generative process of NPs in the following mathematical way.
\begin{equation}
\begin{split}
\tau\sim p(\mathcal{T}),\quad z\sim \mathcal{N}(z;\mu_{\vartheta}(\mathcal{D}_{\tau}^{C}),\Sigma_{\vartheta}(\mathcal{D}_{\tau}^{C}))
\\
x_{i}\sim p(x),
\quad
y_{i}\sim p(y\vert [x_i,z];\vartheta)
\quad
\forall
i\in\{1,2,\dots,n+m\}
\end{split}
\label{generative_process}
\end{equation}

\section{Predictive Distributions in GPs \& NPs}
Here we take one-dimensional deep Gaussian processes \citep{dai2016variational} as an example.
With context points $\mathcal{D}^{C}=\{(x_i,y_i)\}_{i=1}^{n}$ and target points $\mathcal{D}^{T}=\{(x_i,y_i)\}_{i=1}^{n+m}=[x_T,y_T]$, the key to applications is the predictive distribution $p(f(x_T)\vert\mathcal{D}^C,x_T)=\mathcal{N}(y_T;\mu_T,\Sigma_T)$.
The conditional mean $\mu_T$ and covariance $\Sigma_T$ functions in Eq. (\ref{pred_func_gp}) are permutation invariant to the order of context points.
\begin{equation}
\begin{split}
\mu_T=m_{\theta}(x_T)+\Sigma_{T,C}\Sigma_{C,C}^{-1}\big(y_C-m_{\theta}(x_C)\big) \\
\Sigma_T=\Sigma_{T,T}-\Sigma_{T,C}\Sigma_{C,C}^{-1}\Sigma_{C,T}  
\end{split}
\label{pred_func_gp}
\end{equation}
Here the covariance matrix denoted by $\Sigma$ is computed with the context input $x_{C}=(x_1,\dots,x_n)\in\mathbb{R}^{n\times d}$, the target input $x_{T}=(x_1,\dots,x_{n+m})\in\mathbb{R}^{(n+m)\times d}$, and a kernel function $\psi$, \textit{e.g.} $[\Sigma_{C,C}]_{i,j}=\psi(x_i,x_j)$, 
$m_\theta$ is the mean function $m_\theta$,
and the context output is $y_{C}=(y_1,\dots,y_n)\in\mathbb{R}^{n}$.
Nevertheless, the computation of matrix inversion in Eq. (\ref{pred_func_gp}) makes the runtime complexity as expensive as $\mathcal{O}((n+m)^3)$.

\section{NPs Formulation \& Structural Inductive Biases}

\subsection{Prior, Posterior \& Proposal Distributions}

Since fast adaptation is achieved in an amortized way, which reduces the gradient updates \textit{w.r.t.} model parameters to learning function specific latent variables with meta trained neural networks.

\begin{definition}[Permutation Invariant Functions]
Let $\mathcal{S}_n$ be an $n$-element permutation group.
For any permutation operator $g\in\mathcal{S}_n$, the function is a bijective mapping from the order set $\{1,2,\dots,n\}$ to itself:
$$g:[1,2,\dots,n]\to[g_1,g_2,\dots,g_n].$$
Then given a set of data points $\{x_1,x_2,\dots,x_n\}$, the function $\Phi$ is said to be a permutation invariant function if the following equation is satisfied $\forall g\in\mathcal{S}_n$ $$\Phi(g\circ[x_1,x_2,\dots,x_n])=\Phi([x_{g_1},x_{g_2},\dots,x_{g_n}])=\Phi([x_1,x_2,\dots,x_n]).$$
\end{definition}

The context points and the target points are treated as sets, so the amortized network should be permutation invariant \textit{w.r.t.} the order of data points.

\textbf{Approximate Posterior Distribution.}
This is denoted by $q_{\phi}(z\vert\mathcal{D}_{\tau}^{T})$ in this paper.
Usually, the approximate posterior is used in NPs \citep{garnelo2018neural,kim2019attentive} and works as a proxy for the non-analytical exact posterior $p(z\vert\mathcal{D}_{\tau}^{T};\vartheta)$.

\textbf{Prior Distribution.}
This is denoted by $p(z\vert\mathcal{D}_{\tau}^{C};\vartheta)$ in this paper.
Unlike the approximate prior $q_{\phi}(z\vert\mathcal{D}_{\tau}^{C})$ used in NPs, we use an exact functional prior in SI-NPs. 

\textbf{Proposal Distribution.}
This is denoted by $q_{\eta}(z\vert\mathcal{D}_{\tau}^{T})$ in this paper.
The role of the proposal distribution resembles that of the approximate posterior in NPs.
It is used to sample latent variables and enables the computation of the importance weights in NPs.

\begin{figure}[h]
\centering
\includegraphics[width=0.85\textwidth]{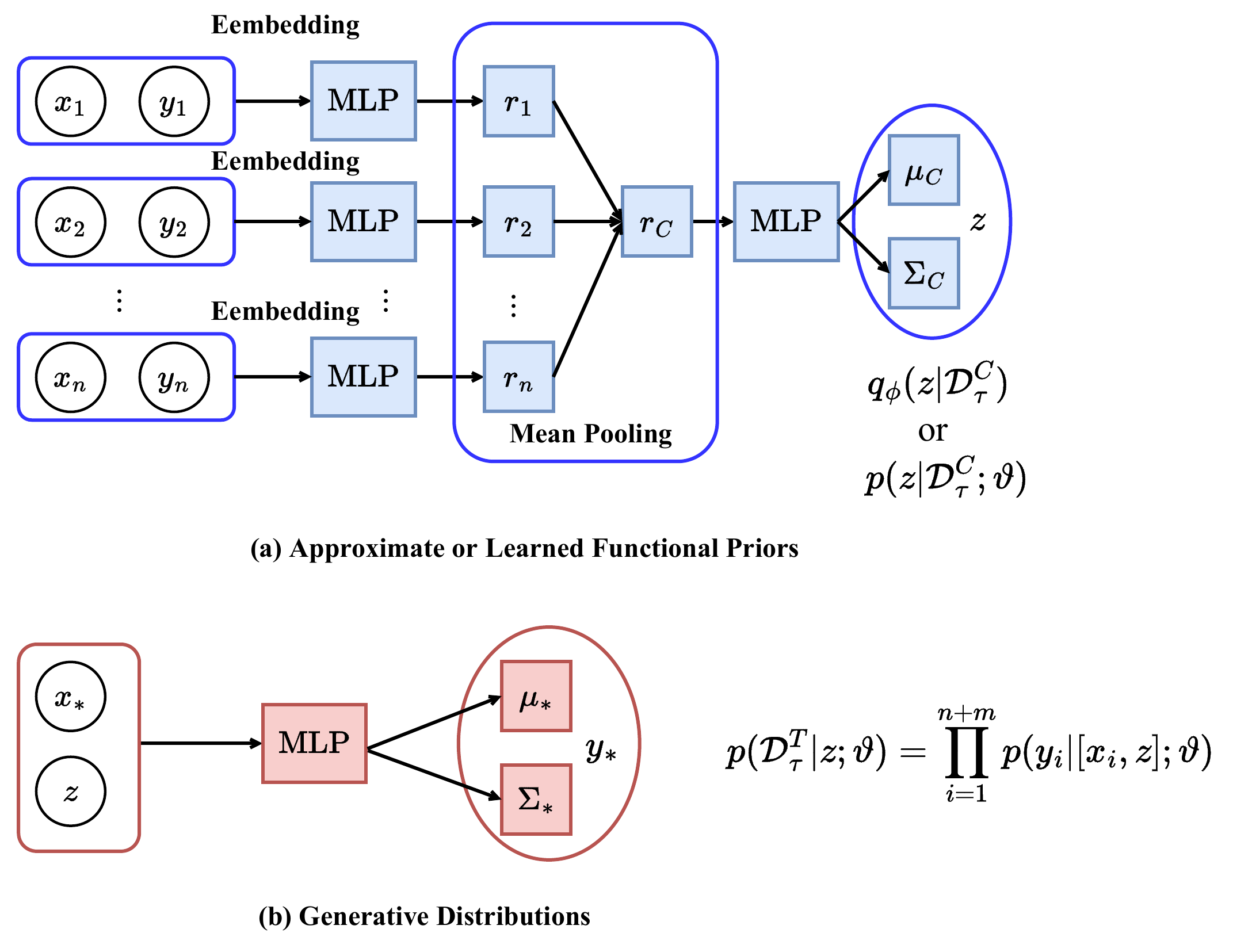}
\caption{Computational Diagram of Vanilla NPs.
The blue one and the pink one are, respectively, the encoder and the decoder structures in vanilla NPs.
The approximate posterior, the approximate prior, the learned prior, and the proposal distributions are in the same neural structure as the encoder.
For the generative distribution in the form of the decoder, the diagram shows one instance $(x_*,y_*)\in\mathcal{D}_{\tau}^{T}$ in a generation and the sampled global latent variable $z\sim p(z\vert\mathcal{D}_{\tau}^{C};\vartheta)$ or $z\sim q_{\phi}(z\vert\mathcal{D}_{\tau}^{C})$ is shared across all data points in one realization.
}
\label{np_computations}
\end{figure}

In vanilla NPs, the latent variable $z$ is inferred from a set of data points, and the distributions $q_{\phi}(z)$ and $q_{\phi}(z\vert\mathcal{D}_{\tau}^{C})$ are learned via functions permutation invariant to the order of data points.
An example module to parameterize $q_{\phi}(z\vert\mathcal{D}_{\tau}^{C})=\mathcal{N}(z;\mu_C,\Sigma_C)$ can be Eq. (\ref{pif_z}) with $\bigoplus$ a mean pooling operator and $\{h_{\phi},g_{\phi}\}$ encoder networks.
\begin{equation}
    \begin{split}
        r_{i}=h_{\phi}([x_i,y_i])
        \
        \forall (x_i,y_i)\in\mathcal{D}_{\tau}^{C},
        \quad
        r_C=\bigoplus_{i=1}^{N}r_{i},
        \quad
        [\mu_C,\Sigma_C]=g_{\phi}(r_C)
    \end{split}
    \label{pif_z}
\end{equation}
The same with that in traditional stochastic processes, a realisation corresponds to a sampled function $f(X)$ generated in a sequential way: $z\sim q_{\phi}(z\vert\mathcal{D}_{\tau}^{C})$, $f(X)\sim p(Y\vert X,z;\vartheta)$.

\textit{Since this paper aims to study the influence of different optimization objectives concerning the NP models, we retain the basic model set-up for all baselines.}
The Fig. (\ref{np_computations}) exhibits the neural architecture of the vanilla NPs, and such a structure is also shared with ML-NPs and SI-NPs in our paper.

As previously mentioned, these methods can be straightforwardly incorporated with other structural inductive biases \citep{kim2019attentive,gordon2019convolutional,gondal2021function,wanglearning}.
For example, in the following section, we augment all methods with attention networks \citep{kim2019attentive} and examine the resulting performance.

\subsection{Approximate ELBOs in NPs \& Proof of Remark 1}\label{append_approx_suboptim}

\begin{subequations}
\begin{align}
        \ln p(\mathcal{D}_{\tau}^{T}\vert \mathcal{D}_{\tau}^{C};\vartheta)=\ln\int p(\mathcal{D}_{\tau}^{T}\vert z;\vartheta)p(z\vert\mathcal{D}_{\tau}^{C};\vartheta)dz\\
        \geq\mathbb{E}_{q_{\phi}(z)}\left[\ln p(\mathcal{D}_{\tau}^{T}\vert z;\vartheta)\right]-D_{KL}\left[\underbrace{q_{\phi}(z)}_{\text{Approximate Posterior}}\parallel \underbrace{p(z\vert\mathcal{D}_{\tau}^{C};\vartheta)}_{\text{Functional Prior}}\right]=\mathcal{L}_{\text{ELBO}}(\vartheta,\phi)\\
        \approx
        \mathbb{E}_{q_{\phi}(z)}\left[\ln p(\mathcal{D}_{\tau}^{T}\vert z;\vartheta)\right]-D_{KL}\left[\underbrace{q_{\phi}(z)}_{\text{Approximate Posterior}}\parallel \underbrace{q_{\phi}(z\vert\mathcal{D}_{\tau}^{C})}_{\text{Approximate Prior}}\right]=\mathcal{L}_{\text{NP}}(\vartheta,\phi)
        \quad\square
\end{align}
\label{vi_obj_np}
\end{subequations}

For Eq. (\ref{vi_obj_np}.b), remember that \textit{w.r.t.} these VAE-like methods, it is hard to guarantee the improvement of the evidence in each iteration when optimizing ELBO due to the existence of a posterior approximation gap.
Meanwhile, the form of the functional prior is unknown.

\begin{proof}[Remark 1]
Vanilla NPs directly replace the real functional prior by the approximate one $q_{\phi}(z\vert\mathcal{D}_{\tau}^{C})$ and introduce the consistent regularizer in Eq. (\ref{vi_np_obj}).
We further introduce the prior approximation gap in Eq. (\ref{vi_np_obj1}), in which \textit{the sign is undetermined}.

\begin{equation}
\begin{split}
    \mathcal{L}_{\text{NP}}(\vartheta,\phi)
    =\mathcal{L}_{\text{ELBO}}(\vartheta,\phi)
    +\underbrace{\mathbb{E}_{q_{\phi}(z)}\left[\ln\frac{q_{\phi}(z\vert\mathcal{D}_{\tau}^{C})}{p(z\vert\mathcal{D}_{\tau}^{C};\vartheta)}\right]}_{\text{Prior Approximation Gap}}
\end{split}
    \label{vi_np_obj1}
\end{equation}

Based on decomposition in Eq. (\ref{vi_np_exact_obj})/(\ref{vi_np_obj1}), we can find that there exists no consistent monotonic relationship between $\mathcal{L}(\vartheta)$ and $\mathcal{L}_{\text{NP}}(\vartheta,\phi)$.
The invalid ELBO makes the optimization \textit{w.r.t.} $\mathcal{L}_{\text{NP}}(\vartheta,\phi)$ not always improve log-likelihood $\mathcal{L}(\vartheta)$.

\begin{equation}
    \begin{split}
        \mathcal{L}(\vartheta)
        \geq
        \mathcal{L}_{\text{ELBO}}(\vartheta,\phi),
        \quad
        \mathcal{L}(\vartheta)
        \ngeq
        \mathcal{L}_{\text{NP}}(\vartheta,\phi)
        \quad
        \forall\vartheta\in\Theta
        \
        \text{and}
        \
        \phi\in\Phi
    \end{split}
    \label{mono_relation}
\end{equation}

On the left side of Eq. (\ref{mono_relation}), the ELBO of NPs can be bounded by the log-likelihood of meta learning dataset.
Optimizing the ELBO can guarantee the optimal or local optimal solution when the approximation gap and the amortization gap are well closed.
However, the right side of Eq. (\ref{mono_relation}) implies the previous commonly-used strategies, \textit{e.g.} normalizing flows for richer variational posterior distributions \citep{rezende2015variational}, auxiliary variables for augmented variational posterior distributions \citep{maaloe2016auxiliary} and more flexible prior distributions \citep{tomczak2018vae}, to close VAEs inference gaps \citep{cremer2018inference} will not guarantee the performance improvement in a theoretical sense.
In other words, the consistent regularizer in NPs is ill-posed for optimization.
\end{proof}

\subsection{Summary of NPs Family}\label{append_summary_nps}

In this part, we summarize the encoder and decoders for typical NP variants and point out the inductive biases behind them.
The information is summarized in Table (\ref{np_survey}).
We can see a variety of inductive biases can be used to improve NPs, most of them from the model structure perspective rather than the optimization objective.

\begingroup
\setlength{\tabcolsep}{3.5pt}
\begin{table}[h!]
\begin{small}
  \begin{center}
    \caption{Summary of Typical Neural Process Related Models (Meta-Testing Scenarios). 
    The recognition model and the generative model respectively correspond to the encoder and the decoder in the family of neural processes.
    Here $[x_C,y_C]$ are context data points and we consider $(x_*,y_*)$ a data point in target dataset.}
    \label{np_survey}
    \begin{tabular}{c|cc|c|}
      \toprule 
       Models &Recognition Model &Generative Model &Inductive Bias \\
      \toprule 
      CNP \citep{garnelo2018conditional} &$z=f_{\phi}(x_C,y_C)$ &$p_{\theta}(y_*\vert[x_*,z])$ &conditional functional  \\
      \midrule 
      NP \citep{garnelo2018neural} &$q_{\phi}(z\vert[x_C,y_C])$ &$p_{\theta}(y_*\vert[x_*,z])$ &global functional \\
      \midrule 
      ANP \citep{kim2019attentive,kim2021neural} &$q_{\phi_1}(z\vert[x_C,y_C])$ &$p_{\theta}(y_*\vert[x_*,z,z_*])$ &global functional \\
      &$f_{\phi_2}(z_*\vert[x_C,y_C],x_*)$ &  &local embedding \\
      \midrule 
      FCRL \citep{gondal2021function} &$f_{\phi}(z\vert[x_C,y_C])$ &$p_{\theta}(y_*\vert[x_*,z])$  &contrastive functional \\
      \midrule 
      ConvNP \citep{foong2020meta} &$p_{\phi}(z\vert[x_C,y_C])$ &$p_{\theta}(y_*\vert[x_*,z])$  &convolutional functional \\
      \midrule 
      Conv-CNP \citep{gordon2019convolutional} &$f_{\phi}(z_*\vert[x_C,y_C],x_*)$ &$p_{\theta}(y_*\vert[x_*,z_*])$  &convolutional functional \\
       \midrule 
      FNP \citep{louizos2019functional} &$f_{\phi}(z_*\vert[x_C,y_C],x_*)$ &$p_{\theta}(y_*\vert z_*)$  &latent DAG \\
      \bottomrule 
    \end{tabular}
  \end{center}
\end{small}
\end{table}
\endgroup

\subsection{Inference Gaps}

In this section, we apply the trick of inference gap decomposition \citep{cremer2018inference} to understand vanilla NPs.
Here we denote the approximate inference gap by $D_{KL}^{\text{AI}}$ and the posterior approximation gap by $D_{KL}^{\text{PA}}$ in Table (\ref{meta_models_survey}).

Since it is infeasible to obtain the exact form for the functional posterior,
we cannot directly close the mentioned approximate gap, and approximate methods are used to learn them.
In vanilla NPs, the consistent regularizer in Eq. (\ref{vi_np_obj}) works as the surrogate for the ELBO.
But based on our analysis, closing this surrogate gap cannot lead to a theoretically optimal solution of functional priors.

\begingroup
\setlength{\tabcolsep}{5.5pt}
\begin{small}
\begin{table}[h!]
  \begin{center}
    \caption{Inference Gaps in vanilla NPs.
    The $\downarrow$ indicates the minimization to obtain the optimal inference solution.
    The sign $-$ $-$ means not applicable in deriving equivalent KL divergence form.
    The sign $^*$ indicates the optimal posterior approximation in the family of variational distributions $\phi^{*}=\arg\min_{\phi\in\Phi}D_{KL}\left[q_{\phi}(z)\parallel p(z\vert\mathcal{D}_{\tau}^{T};\vartheta^{*})\right]$.
    Here $\vartheta^*$ consists of the optimal parameters in priors $p(z\vert\mathcal{D}_{\tau}^{C};\vartheta^*)$ and the conditional distribution $p(\mathcal{D}_{\tau}^{T}\vert z;\vartheta^*)$.
    }
    \label{meta_models_survey}
    \begin{tabular}{c|cc|}
      \toprule 
      Terms &Optimization Objective &KL Divergence  \\
       & &or Gaps\\
      \toprule 
      Approximate Inference &$\mathcal{L}(\vartheta^*)
        -\mathcal{L}_{\text{ELBO}}(\vartheta^*,\phi)$ $\downarrow$  &$D_{KL}\left[q_{\phi}(z)\parallel p(z^*\vert\mathcal{D}_{\tau}^{T};\vartheta)\right]$ \\
      Posterior Approximation &$\mathcal{L}(\vartheta^*)
        -\mathcal{L}_{\text{ELBO}}(\vartheta^*,\phi^*)$ $\downarrow$  &$D_{KL}\left[q_{\phi^*}(z)\parallel p(z\vert\mathcal{D}_{\tau}^{T};\vartheta^*)\right]$ \\
      Amortization &$\mathcal{L}_{\text{ELBO}}(\vartheta^*,\phi^*)
        -\mathcal{L}_{\text{ELBO}}(\vartheta^*,\phi)$ $\downarrow$  &$D_{KL}^{\text{AI}}-D_{KL}^{\text{PA}}$ \\
    \midrule 
      NP Prior Approximation &$|\mathcal{L}_{\text{ELBO}}(\vartheta^*,\phi^*)-\mathcal{L}_{\text{NP}}(\vartheta^*,\phi^*)|$  &$-$ \\
      \midrule 
      Surrogate Likelihood &$\mathcal{L}(\vartheta^*)-\mathcal{L}(\vartheta;\vartheta_k)$ $\downarrow$ &$\mathcal{L}(\vartheta^*)-\mathcal{L}(\vartheta_H)$  \\
      \bottomrule 
    \end{tabular}
  \end{center}
\end{table}
\end{small}
\endgroup

\section{Formulation of Variational Expectation Maximization Method}\label{formula_vem}

In this section, we detail the progress of optimizing the NP model with the help of variational expectation maximization algorithms.
The concept of meta learning surrogate functions is introduced, and NPs are verified.
Meanwhile, the improvement guarantee as well as other concerning technical points are included to better understand our method.

\subsection{Proof of Improvement Guarantee using Variational Expectation Maximization}\label{formula_vem_guarantee}

\subsubsection{Meta Learning Surrogate Function}

\begin{definition}[Surrogate Function]\footnote{Note that the surrogate function above is defined in the Minorize-Maximization (MM) algorithm sense \citep{hunter2004tutorial}.}
Given the objective function $f(\vartheta)$ to maximize, surrogate functions $g(\vartheta;\vartheta_k)$ w.r.t. $f(\vartheta)$ are a family of functions with the following properties.
\begin{itemize}
    \item Both $f(\vartheta)$ and $g(\vartheta;\vartheta_k)$ are $\mathcal{C}^2$-functions, which means $f(\vartheta)$ has first order and second order derivatives of those functions exist and these are smooth in the domain of the function.
    \item The following two formulas hold: 
    \begin{equation}
        g(\vartheta;\vartheta_k)\leq f(\vartheta)
        \ \forall \vartheta;
        \quad g(\vartheta_{k};\vartheta_{k})=f(\vartheta_k).
    \end{equation}
\end{itemize}
\end{definition}

When the objective function $f(\vartheta)$ to maximize is complicated, \textit{e.g.} multi-modal likelihood functions, a surrogate function $g(\vartheta;\vartheta_k)$ enables an easier proxy implementation with convergence guarantee to at least the local optimal.
To see this point, recall the properties of the surrogate function
$g(\vartheta_k;\vartheta_k)=f(\vartheta_k)$. 
The update rule for the surrogate function follows that $\vartheta_{k+1}=\arg\max_{\vartheta}g(\vartheta;\vartheta_k)$.
And this results in $f(\vartheta_{k+1})\geq g(\vartheta_{k+1};
\vartheta_{k})\geq f(\vartheta_k)$.

Recall the following function $\mathcal{L}(\vartheta;\vartheta_k)$ in the main paper.

\begin{equation}
    \begin{split}
        \mathcal{L}(\vartheta;\vartheta_k)=\sum_{\tau\in\mathcal{T}}\mathbb{E}_{p(z\vert\mathcal{D}_{\tau}^{T};\vartheta_k)}\left[\ln p(\mathcal{D}_{\tau}^{T},z\vert \mathcal{D}_{\tau}^{C};\vartheta)-\ln p(z\vert\mathcal{D}_{\tau}^{T};\vartheta_k)\right]
    \end{split}
    \label{surrogate_mlmdns_append}
\end{equation}

The expectation operation in the $k$-th iteration step corresponds to $\texttt{E-step}: q_{\phi}(z\vert\mathcal{D}_{\tau}^{C})=p_{\vartheta_k}(z\vert\mathcal{D}_{\tau}^{C})$, while the maximization operation in the $k$-th iteration step updates the parameter as $\texttt{M-step}: \vartheta_{k+1}=\arg\max_{\vartheta}\mathcal{L}(\vartheta;\vartheta_k)$.

\begin{subequations}
\begin{align}
\underbrace{\ln p(\mathcal{D}_{\tau}^{T}\vert \mathcal{D}_{\tau}^{C};\vartheta_k)}_{\text{Model Evidence}}\underset{\texttt{E-step}}{=}\mathcal{L}(\vartheta_k;\vartheta_k)
\underset{\texttt{M-step}}{\leq}\mathcal{L}(\vartheta_{k+1};\vartheta_k)\\
\leq \mathcal{L}(\vartheta_{k+1};\vartheta_k)+D_{KL}[p(z\vert\mathcal{D}_{\tau}^{T};\vartheta_k)\parallel p(z\vert\mathcal{D}_{\tau}^{T};\vartheta_{k+1})]
=\underbrace{\ln p(\mathcal{D}_{\tau}^{T}\vert \mathcal{D}_{\tau}^{C};\vartheta_{k+1})}_{\text{Model Evidence}}
\quad\square
\end{align}
\label{em_ineq}
\end{subequations}

\subsubsection{Improvement Guarantee}

As illustrated in Eq.s (\ref{em_ineq}), the surrogate function $\mathcal{L}(\vartheta;\vartheta_k)$ is bounded by two log-likelihoods. Over the process of iterations, the log-likelihood is gradually increased to the final convergence. 

\begin{equation}
    \begin{split}
        \ln p(\mathcal{D}_{\tau}^{T}\vert \mathcal{D}_{\tau}^{C};\vartheta_1)
        \leq\mathcal{L}(\vartheta_{2};\vartheta_1) 
        \leq\ln p(\mathcal{D}_{\tau}^{T}\vert \mathcal{D}_{\tau}^{C};\vartheta_2)
        \leq\cdots
        \leq\mathcal{L}(\vartheta_{H};\vartheta_{H-1}) 
        \leq\ln p(\mathcal{D}_{\tau}^{T}\vert \mathcal{D}_{\tau}^{C};\vartheta_{H})
    \end{split}
\end{equation}

This indicates that the finally updated parameters of the surrogate function are exactly (sub-)optimal ones for the evidence.
Based on these rules, directly optimizing the surrogate function step by step can theoretically guarantee the finding of optimal or at least a local optimal parameters.

\subsection{Importance Sampling in a Variational EM Algorithm}

Though the conditional marginal distribution $p(\mathcal{D}_{\tau}^{T}\vert\mathcal{D}_{\tau}^{C};\vartheta_k)$ is not analytical, the importance sampling trick can be used to estimate the result with the help of a proposal distribution $q_{\eta}(z\vert\mathcal{D}_{\tau}^{T})$.

\begin{equation}
    \begin{split}
        p(\mathcal{D}_{\tau}^{T}\vert\mathcal{D}_{\tau}^{C};\vartheta_k)=\int p(\mathcal{D}_{\tau}^{T},z\vert\mathcal{D}_{\tau}^{C};\vartheta_k)dz
        =\int q_{\eta}(z\vert\mathcal{D}_{\tau}^{T})\frac{p(\mathcal{D}_{\tau}^{T},z\vert\mathcal{D}_{\tau}^{C};\vartheta_k)}{q_{\eta}(z\vert\mathcal{D}_{\tau}^{T})}dz\\
        \approx\frac{1}{B}\sum_{b=1}^{B}\frac{p(\mathcal{D}_{\tau}^{T},z^{(b)}\vert\mathcal{D}_{\tau}^{C};\vartheta_k)}{q_{\eta}(z^{(b)}\vert\mathcal{D}_{\tau}^{T})}
        =\frac{1}{B}\sum_{b=1}^{B}\omega^{(b)},
        \\
        \text{with}
        \
        z^{(b)}\sim q_{\eta}(z\vert\mathcal{D}_{\tau}^{T})
        \
        \text{and}
        \
        \omega^{(b)}=\frac{p(\mathcal{D}_{\tau}^{T},z^{(b)}\vert\mathcal{D}_{\tau}^{C};\vartheta_k)}{q_{\eta}(z^{(b)}\vert\mathcal{D}_{\tau}^{T})}
    \end{split}
\end{equation}

Especially, the joint distribution is computed via the decomposition that $p(\mathcal{D}_{\tau}^{T},z^{(b)}\vert\mathcal{D}_{\tau}^{C};\vartheta_k)=p(z^{(b)}\vert\mathcal{D}_{\tau}^{C};\vartheta_k)p(\mathcal{D}_{\tau}^{T}\vert z^{(b)};\vartheta_k)$ with $p(\mathcal{D}_{\tau}^{T}\vert z^{(b)};\vartheta_k)=\prod_{i=1}^{n+m}p(y_i\vert[x_i,z^{(b)}];\vartheta_k)$.

With the above equation, the intractable optimization objective is transformed into a feasible one.

\begin{subequations}
\begin{align}
    \mathcal{L}_{\text{EM}}(\vartheta;\vartheta_k)=\mathbb{E}_{p(z\vert\mathcal{D}_{\tau}^{T};\vartheta_{k})}
    \ln p(\mathcal{D}_{\tau}^{T},z\vert\mathcal{D}_{\tau}^{C};\vartheta)
    =\int\frac{p(\mathcal{D}_{\tau}^{T},z\vert\mathcal{D}_{\tau}^{C};\vartheta_k)}{p(\mathcal{D}_{\tau}^{T}\vert\mathcal{D}_{\tau}^{C};\vartheta_k)}\ln p(\mathcal{D}_{\tau}^{T},z\vert\mathcal{D}_{\tau}^{C};\vartheta)dz\\
    =\int q_{\eta}(z\vert\mathcal{D}_{\tau}^{T})\frac{p(\mathcal{D}_{\tau}^{T},z\vert\mathcal{D}_{\tau}^{C};\vartheta_k)}{q_{\eta}(z\vert\mathcal{D}_{\tau}^{T})p(\mathcal{D}_{\tau}^{T}\vert\mathcal{D}_{\tau}^{C};\vartheta_k)}\ln p(\mathcal{D}_{\tau}^{T},z\vert\mathcal{D}_{\tau}^{C};\vartheta)dz\\
    \approx\frac{1}{B}\sum_{b=1}^{B}\frac{\omega^{(b)}}{p(\mathcal{D}_{\tau}^{T}\vert\mathcal{D}_{\tau}^{C};\vartheta_k)}\ln p(\mathcal{D}_{\tau}^{T},z^{(b)}\vert\mathcal{D}_{\tau}^{C};\vartheta)\\
    =\sum_{b=1}^{B}\frac{\omega^{(b)}}{\sum_{b^{\prime}=1}^{B}\omega^{(b^{\prime})}}\ln p(\mathcal{D}_{\tau}^{T},z^{(b)}\vert\mathcal{D}_{\tau}^{C};\vartheta)
    =\sum_{b=1}^{B}\hat{\omega}^{(b)}\ln p(\mathcal{D}_{\tau}^{T},z^{(b)}\vert\mathcal{D}_{\tau}^{C};\vartheta)=\mathcal{L}_{\text{SI-NP}}(\vartheta,\eta;\vartheta_k)
\end{align}
\label{detail_em}
\end{subequations}

We can expand the term inside the expectation in Eq (\ref{detail_em}.a) as follows.

\begin{equation}
    \begin{split}
        p(\mathcal{D}_{\tau}^{T},z^{(b)}\vert\mathcal{D}_{\tau}^{C};\vartheta)=p(z^{(b)}\vert \mathcal{D}_{\tau}^{C};\vartheta)p(\mathcal{D}_{\tau}^{T}\vert z^{(b)};\vartheta)
    \end{split}
\end{equation}

As for the exact posterior $p(z\vert\mathcal{D}_{\tau}^{T};\vartheta_{k})$, we can get the following expansion.

\begin{equation}
    \begin{split}
        p(z\vert\mathcal{D}_{\tau}^{T};\vartheta_{k})=\frac{p(z,\mathcal{D}_{\tau}^{T};\vartheta_{k})}{\int p(z,\mathcal{D}_{\tau}^{T};\vartheta_{k})dz}
        =\frac{p(z\vert\mathcal{D}_{\tau}^{C};\vartheta_{k})p(\mathcal{D}_{\tau}^{T}\vert z;\vartheta_{k})}{\int p(z\vert\mathcal{D}_{\tau}^{C};\vartheta_{k})p(\mathcal{D}_{\tau}^{T}\vert z;\vartheta_{k})dz}
    \end{split}
\end{equation}

The distribution has the complicated denominator and this makes it infeasible to directly sample from the conditional distribution.

\subsection{Optimization Objective with Proposal Distributions}
This subsection shows the mentioned optional optimization step $\texttt{E}$-step \#2 in Algorithm (\ref{vem_pseudo}).

\subsubsection{Update of Proposal Distributions (Optional)}\label{append_opt_proposal}
The use of proposal distribution $q_{\eta}$ enables sampling $z$ for Monte Carlo estimates.
Another role of the proposal distribution is to work as a proxy for the posterior $p(z\vert\mathcal{D}_{\tau}^{T};\vartheta_{k})$, and the variance $\mathbb{V}_{q_{\eta}}\left[\frac{p(z\vert\mathcal{D}_{\tau}^{T};\vartheta_k)}{q_{\eta}(z\vert\mathcal{D}_{\tau}^{T})}\ln p(\mathcal{D}_{\tau}^{T},z\vert \mathcal{D}_{\tau}^{C};\vartheta)\right]$ is expected to be lower for stable training.
So a reasonable optimization objective is to get two distributions as close as each other, \textit{e.g.} minimizing the Kullback–Leibler divergence $D_{KL}[p(z\vert\mathcal{D}_{\tau}^{T};\vartheta_{k})\parallel q_{\eta}(z\vert\mathcal{D}_{\tau}^{T})]$.

This treatment is equivalent to wake phase updates in \citep{bornschein2014reweighted}. 
We can obtain the optimization objective on the right side of Eq. (\ref{mlmdn_kl}) with the help of the self-normalized importance sampling.

\begin{equation}
    \begin{split}
    \min_{\eta}D_{KL}[p(z\vert\mathcal{D}_{\tau}^{T};\vartheta_{k})\parallel q_{\eta}(z\vert\mathcal{D}_{\tau}^{T})]
    \\
    \Leftrightarrow
    \min_{\eta}
    \mathcal{L}_{\text{KL}}(\eta;\eta_{k-1},\vartheta_{k})
    =-\sum_{b=1}^{B}\hat{\omega}^{(b)}\ln q_{\eta}(z^{(b)}\vert\mathcal{D}_{\tau}^{T})
    \end{split}
    \label{mlmdn_kl}
\end{equation}

Here the self-normalized importance weights $\{\hat{\omega}^{(b)}\}_{b=1}^{B}$ inside the above equation are the same as that in Eq. (\ref{iw_obj}).
Also note that the the denominator $\frac{1}{B}\sum_{b^{\prime}=1}^{B}\omega^{(b^{\prime})}$ of the weight relates to the estimate of $p(\mathcal{D}_{\tau}^{T}\vert\mathcal{D}_{\tau}^{C})$, which has biases with limited samples at the beginning of training.
However, with the improvement of approximation, the bias can be decreased accordingly \citep{zimmermann2021nested}.

\subsubsection{Derivation of the Proposal Update Objective}
It is trivial to see the following equation since the term $\mathbb{E}_{p(z\vert\mathcal{D}_{\tau}^{T};\vartheta_{k})}\left[p(z\vert\mathcal{D}_{\tau}^{T};\vartheta_{k})\right]$ is a constant.

\begin{equation}
    \begin{split}
        \min_{\eta}D_{KL}[p(z\vert\mathcal{D}_{\tau}^{T};\vartheta_{k})\parallel q_{\eta}(z\vert\mathcal{D}_{\tau}^{T})]
        \Leftrightarrow
        \min_{\eta}-\mathbb{E}_{p(z\vert\mathcal{D}_{\tau}^{T};\vartheta_{k})}[\ln q_{\eta}(z\vert\mathcal{D}_{\tau}^{T})]
    \end{split}
    \label{proposal_obj_append}
\end{equation}

Once again, we apply self-normalized importance sampling to the right side of Eq. (\ref{proposal_obj_append}).
With the same set of sampled latent variables, the reweighted objective \textit{w.r.t.} the proposal distribution can be derived.

\begin{equation}
    \begin{split}
        \min_{\eta}-\mathbb{E}_{p(z\vert\mathcal{D}_{\tau}^{T};\vartheta_{k})}[\ln q_{\eta}(z\vert\mathcal{D}_{\tau}^{T})]
        \approx
        -\sum_{b=1}^{B}\hat{\omega}^{(b)}
        \ln q_{\eta}(z^{(b)}\vert\mathcal{D}_{\tau}^{T})
        =\mathcal{L}_{\text{KL}}(\eta;\eta_{k-1},\vartheta_{k})
    \end{split}
    \label{weight_proposal_obj_append}
\end{equation}


\subsection{Gradient Estimates in Variational EM}

In the $\texttt{E-step}$ \#2, note that the model parameter is fixed as $\vartheta_k$, we can estimate the gradient of $\eta$ \textit{w.r.t.} $\mathcal{L}_{\text{KL}}(\eta;\eta_{k-1},\vartheta_k)$ in the following way.
This operation is to close the divergence between the proposal distribution and the exact posterior distribution.

\begin{equation}
    \begin{split}
        \frac{\partial\mathcal{L}_{\text{KL}}(\eta;\eta_{k-1},\vartheta_{k})}{\partial\eta}
        =\sum_{b=1}^{B}\hat{\omega}^{(b)}\left(\frac{\partial\ln q_{\eta}(z^{(b)}\vert\mathcal{D}_{\tau}^{T})}{\partial\eta}\right)
    \end{split}
\end{equation}

In the $\texttt{M-step}$, note that the normalized importance weights are constant, the proposal distribution is fixed, and the gradient \textit{w.r.t.} $\mathcal{L}_{\text{SI-NP}}(\vartheta;\eta_k,\vartheta_k)$ can be estimated in a straightforward way as follows.

\begin{equation}
    \begin{split}
        \frac{\partial\mathcal{L}_{\text{SI-NP}}(\vartheta;\eta_k,\vartheta_k)}{\partial\vartheta}
        =\sum_{b=1}^{B}\hat{\omega}^{(b)}\left(\frac{\partial\ln p(z^{(b)}\vert\mathcal{D}_{\tau}^{C};\vartheta)}{\partial\vartheta}+\frac{\partial\ln p(\mathcal{D}_{\tau}^{T}\vert z^{(b)};\vartheta)}{\partial\vartheta}\right)
    \end{split}
    \label{grad_iw}
\end{equation}

We can see the role of normalized importance weights in gradient estimates from Eq. (\ref{grad_iw}): remember that $\hat{\omega}^{(b)}\propto p(\mathcal{D}_{\tau}^{T}\vert z^{(b)};\vartheta_k)$ in the $k$-th iteration when the functional prior works as the proposal distribution, so the more gradients will be allocated to those particles with higher generative likelihoods. 
Meanwhile, the normalized importance weights influence the optimization of the functional prior distribution so that the functional prior can match the best set of particles well.

\section{Connection with CNPs}
\subsection{Prior Collapse in SI-NPs with One Monte Carlo Sample}\label{append_proof_prior_collapse}

\begin{theorem}[L'Hôpital's Rule \citep{hospital1696anlyse}]\label{LH_rule}
Let $f(x)$ and $g(x)$ be two functions differentiable on an open interval $\mathcal{I}$ except possibly at a point $c$ contained in $\mathcal{I}$.
If $\lim_{x\to c}f(x)=\lim_{x\to c}g(x)=\infty$, and $\left(\frac{1}{g(x)f(x)}\right)^{\prime}\neq 0$ with $\forall x\in\mathcal{I}$ and $x\neq c$, we can have the following limit equation.

\begin{equation}
    \lim_{x\to c}f(x)-g(x)
    =\lim_{x\to c}\frac{\frac{1}{g(x)}-\frac{1}{f(x)}}{\frac{1}{g(x)f(x)}}
    =\lim_{x\to c}\frac{\left(\frac{1}{g(x)}-\frac{1}{f(x)}\right)^{\prime}}{\left(\frac{1}{g(x)f(x)}\right)^{\prime}}
\end{equation}

\end{theorem}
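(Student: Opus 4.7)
The plan is to split the chain of equalities into two parts of very different character: the first equality is a pointwise algebraic identity, and only the second equality requires a genuine L'Hôpital argument. Since $f(x), g(x) \to \infty$ as $x \to c$, both $f$ and $g$ are eventually nonzero on some deleted neighborhood $\mathcal{I}'\subset\mathcal{I}$ of $c$, so the reciprocals $\tfrac{1}{f}$, $\tfrac{1}{g}$, and $\tfrac{1}{fg}$ are well-defined and differentiable on $\mathcal{I}'$.

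First I would verify the algebraic identity by clearing the common factor:
\begin{equation}
\frac{\tfrac{1}{g(x)}-\tfrac{1}{f(x)}}{\tfrac{1}{g(x)f(x)}}=\left(\frac{1}{g(x)}-\frac{1}{f(x)}\right)g(x)f(x)=f(x)-g(x).
\end{equation}
This holds pointwise on $\mathcal{I}'$, so equality of the corresponding limits as $x\to c$ is immediate and requires nothing beyond the fact that pointwise-equal functions have equal limits whenever those limits exist.

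Next I would verify that the hypotheses of the classical $0/0$ L'Hôpital's rule are satisfied for the middle expression. Because $f,g\to\infty$, the reciprocals tend to $0$, so both numerator $\tfrac{1}{g}-\tfrac{1}{f}$ and denominator $\tfrac{1}{fg}$ vanish in the limit, giving the required $0/0$ indeterminate form. Differentiability of $f,g$ on $\mathcal{I}\setminus\{c\}$ transfers to $\tfrac{1}{g}-\tfrac{1}{f}$ and $\tfrac{1}{fg}$ via the quotient and product rules, and the explicit hypothesis $\bigl(\tfrac{1}{fg}\bigr)'(x)\neq 0$ on $\mathcal{I}\setminus\{c\}$ is exactly the nonvanishing-derivative condition needed to apply Cauchy's mean value theorem in the standard proof of L'Hôpital's rule. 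Invoking that rule in the direction ``if the limit of the derivative ratio exists then it equals the limit of the original ratio'' yields the second equality, and concatenating with the first gives the full chain.

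The only subtlety I would be careful about is the directionality of L'Hôpital's rule: the displayed equation should be read as asserting that if the rightmost limit of the derivative ratio exists, then the other two limits also exist and all three coincide. Beyond this, the argument is a routine reduction of an $\infty-\infty$ form to a $0/0$ form through the algebraic identity, and I anticipate no real obstacle — the one conceptual move is the rewriting in the first step, after which everything is a direct appeal to a textbook result.
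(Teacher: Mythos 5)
Your proof is correct. Note that the paper does not actually prove this statement: it is quoted as a classical result (cited to l'H\^opital's 1696 text) and used only as a tool in the proof of Proposition \ref{one_sample_prop}, so there is no paper proof to compare against in detail. Your decomposition is exactly the justification the displayed chain presupposes: the first equality is the pointwise identity $\bigl(\tfrac{1}{g}-\tfrac{1}{f}\bigr)gf=f-g$ on a deleted neighborhood where $f,g\neq 0$ (guaranteed by $f,g\to\infty$), and the second is the classical $0/0$ form of l'H\^opital's rule, whose hypotheses (vanishing numerator and denominator, differentiability of the reciprocals, nonvanishing derivative of $\tfrac{1}{fg}$) you verify correctly; your caveat about directionality --- the chain should be read as holding when the rightmost limit of the derivative ratio exists --- is the right reading and is indeed how the paper applies it in Eq. (\ref{proof_one_mc_collapse}).
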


\begin{proof}[Proposition 3]
Note that in our default setup of SI-NPs, the functional prior works as the proposal distribution to sample the latent variable. 

Let $z\in\mathbb{R}^d$ be the latent variable for a diagonal Gaussian conditional prior $p(z\vert\mathcal{D}_{\tau}^{C};\vartheta)=\mathcal{N}(z;\mu_{\vartheta}(\mathcal{D}_{\tau}^{C}),\Sigma_{\vartheta}(\mathcal{D}_{\tau}^{C}))$.
Here the learned mean and the covariance matrix are simply denoted by $\mu_{\vartheta}=[\mu_1,\dots,\mu_d]^T\in\mathbb{R}^d$ and $\Sigma_{\vartheta}=\text{diag}\left[\sigma_1^2,\dots,\sigma_d^2\right]$.

With one Monte Carlo sample $\hat{z}=[\hat{z}_1,\dots,\hat{z}_d]^T$ from the conditional prior, it can be written as $\hat{z}=\mu_{\vartheta}+\hat{\epsilon}\Sigma_{\vartheta}^{\frac{1}{2}}, \hat{\epsilon}\sim\mathcal{N}(0,\mathcal{I}_d)$ with help of a reparameterization trick \citep{kingma2013auto}.

\begin{equation}
\begin{split}
    \mathbb{E}_{p(z\vert\mathcal{D}_{\tau}^{C};\vartheta_k)}\left[\ln p(z\vert\mathcal{D}_{\tau}^{C};\vartheta)\right] 
    \approx
    \ln p(z\vert\mathcal{D}_{\tau}^{C};\vartheta)
    =-\frac{1}{2}\ln(2\pi)
    +\sum_{i=1}^{d}\left[-\ln\sigma_i-\frac{(\mu_{i}-\hat{z}_{i})^2}{2\sigma_{i}^2}\right]
\end{split}
\label{prior_collapse_obj}
\end{equation}

Eq. (\ref{prior_collapse_obj}) is the result of one Monte Carlo estimate, termed as the collapse term in the main paper.
Built up on these, we rewrite the SI-NP optimization objective to maximize as Eq. (\ref{one_mc_sinp}).
\begin{equation}
    \begin{split}
        \mathcal{L}_{\text{SI-NP}}=
        \mathbb{E}_{p(z\vert\mathcal{D}_{\tau}^{C};\vartheta)}\left[\ln p(\mathcal{D}_{\tau}^{T}\vert z;\vartheta)\right]+
        \mathbb{E}_{p(z\vert\mathcal{D}_{\tau}^{C};\vartheta_k)}\left[\ln p(z\vert\mathcal{D}_{\tau}^{C};\vartheta)\right]\\
        \approx\sum_{i=1}^{n+m}\ln p(y_i\vert[x_i,\mu_{\vartheta}+\hat{\epsilon}\Sigma_{\vartheta}^{\frac{1}{2}};\vartheta])
        -\left(\frac{1}{2}\ln(2\pi)
        +\sum_{i=1}^{d}\left[\ln\sigma_i+\frac{(\mu_{i}-\hat{z}_{i})^2}{2\sigma_{i}^2}\right]\right)
    \end{split}
    \label{one_mc_sinp}
\end{equation}

Now we prove that when the learned variance parameter $\{\sigma_i\}_{i=1}^{d}$ collapse into the value zero, the optimization objective in Eq. (\ref{prior_collapse_obj}) and Eq. (\ref{one_mc_sinp}) can be maximized.
To simplify the notation, we put the mean variable $\mu_\vartheta$ aside, focus more on the variance variable $\Sigma_\vartheta$ and let the value $\frac{\mu_i-\hat{z}_i}{2}$ denoted by $\kappa_i$.
We can directly apply \textbf{L'Hôpital's Rule} in \textbf{Theorem} (\ref{LH_rule}) to these quantities and obtain the following equations.

\begin{equation}
    \begin{split}
        \lim_{\sigma_i\to 0}-\ln\sigma_{i}-\frac{\kappa_i}{\sigma_{i}^2}
        =\lim_{\sigma_i\to 0}\frac{\frac{\sigma_{i}^2}{\kappa_i}+\frac{1}{\ln\sigma_i}}{-\frac{\sigma_{i}^2}{\kappa_i\ln\sigma_{i}}}
        =\lim_{\sigma_i\to 0}\frac{\sigma_{i}^2\ln\sigma_{i}+\kappa_i}{-\sigma_{i}^2}
        \\
        =\lim_{\sigma_i\to 0}\frac{\left(\sigma_{i}^2\ln\sigma_{i}+\kappa_i\right)^{\prime}}{\left(-\sigma_{i}^2\right)^{\prime}}
        =\lim_{\sigma_i\to 0}\frac{2\sigma_{i}\ln\sigma_{i}+\sigma_i}{-2\sigma_i}
        =\lim_{\sigma_i\to 0}-\ln\sigma_{i}-\frac{1}{2}
        =+\infty
    \end{split}
    \label{proof_one_mc_collapse}
\end{equation}

Putting them together, the Gaussian latent variable will finally collapse into a Dirac delta distribution and this demonstrates the equivalence between SI-NP with one Monte Carlo sample and CNP.

\end{proof}

\subsection{Effect of Adjusting Coefficients of the Prior Regularizer}

\begin{remark}
Either increasing the number of Monte Carlo samples or lower the weight of the collapse term in SI-NPs can effectively avoid the prior collapse.
\end{remark}

With increase of Monte Carlo samples, we can see the scale of the generative term outweights that of the collapse term, which indicates more weights are put in the gradient \textit{w.r.t.} the variance parameters to maximize the generative log-likelihood.
In this way, it can naturally avoid the prior collapse caused by the second term.

Next, we analyze the influence of the coefficients $\alpha$ with $\alpha\in(0,1)$ for the generative and functional prior likelihoods.
To this end, we go back to the original meta learning objective and introduce the $\alpha$-dependent one as $\mathcal{L}(\vartheta;\alpha)$ in Eq. (\ref{coeff_prior_obj}).

\begin{equation}
    \begin{split}
        \max_{\vartheta}\mathbb{E}_{p(z\vert\mathcal{D}_{\tau}^{T};\vartheta_k)}\left[\ln p(\mathcal{D}_{\tau}^{T}\vert z;\vartheta)\right]
        +(1-\alpha)\mathbb{E}_{p(z\vert\mathcal{D}_{\tau}^{T};\vartheta_k)}\left[\ln p(z\vert \mathcal{D}_{\tau}^{C};\vartheta)\right]=\mathcal{L}(\vartheta;\alpha)
        \\
        \Leftrightarrow\max_{\vartheta}\mathbb{E}_{p(z\vert\mathcal{D}_{\tau}^{T};\vartheta_k)}\left[\ln p(\mathcal{D}_{\tau}^{T}\vert z;\vartheta)\right]
        +(1-\alpha)\mathbb{E}_{p(z\vert\mathcal{D}_{\tau}^{T};\vartheta_k)}\left[\ln \frac{p(z\vert \mathcal{D}_{\tau}^{C};\vartheta)}{p(z\vert\mathcal{D}_{\tau}^{T};\vartheta_k)}+\ln p(z\vert\mathcal{D}_{\tau}^{T};\vartheta_k)\right]
        \\
        \Leftrightarrow\max_{\vartheta}\mathbb{E}_{p(z\vert\mathcal{D}_{\tau}^{T};\vartheta_k)}\left[\ln p(\mathcal{D}_{\tau}^{T}\vert z;\vartheta)\right]
        +(1-\alpha)D_{KL}\left[p(z\vert\mathcal{D}_{\tau}^{T};\vartheta_k)\parallel p(z\vert\mathcal{D}_{\tau}^{C};\vartheta)\right]
        \\
        +(1-\alpha)\underbrace{\mathbb{E}_{p(z\vert\mathcal{D}_{\tau}^{T};\vartheta_k)}\left[p(z\vert\mathcal{D}_{\tau}^{T};\vartheta_k)\right]}_{\text{Constant}}
         \\
        \Leftrightarrow\max_{\vartheta}\mathbb{E}_{p(z\vert\mathcal{D}_{\tau}^{T};\vartheta_k)}\left[\ln p(\mathcal{D}_{\tau}^{T}\vert z;\vartheta)\right]
        +(1-\alpha)D_{KL}\left[p(z\vert\mathcal{D}_{\tau}^{T};\vartheta_k)\parallel p(z\vert\mathcal{D}_{\tau}^{C};\vartheta)\right]
    \end{split}
    \label{coeff_prior_obj}
\end{equation}

As noticed in $\mathcal{L}(\vartheta;\alpha)$, the KL divergence constraint enforces the functional prior $p(z\vert\mathcal{D}_{\tau}^{C};\vartheta)$ to be close to the learned functional posterior $p(z\vert\mathcal{D}_{\tau}^{T};\vartheta_k)$.
Since the functional posterior is derived from the full observation of a function, this suggests that the relation of conditional entropy $\mathcal{H}(z\vert\mathcal{D}_{\tau}^{T})\leq\mathcal{H}(z\vert\mathcal{D}_{\tau}^{C})$.
This trait is also empirically supported by Fig. (\ref{lv_statistics_vis}), where the functional priors with fewer context points exhibit higher values in the trace of the covariance matrices.
As consequence, larger $\alpha$ values impose less constraint on the KL divergence term and results in higher entropy of $p(z\vert\mathcal{D}_{\tau}^{C};\vartheta)$.

\section{Experimental Setup \& Implementation Details}\label{append_implement}
In all experiments, we use Adam \citep{kingma2013auto} as the default optimizer for all experiments.
Pytorch\footnote{\url{https://pytorch.org/}} works as the toolkit to program and run experiments.

The implementation of vanilla NPs/CNPs/ANPs follows the repository\footnote{\url{https://github.com/deepmind/neural-processes}} and related work \citep{garnelo2018conditional,garnelo2018neural,kim2019attentive} in DeepMind.
The implementation of ML-NPs is based on that in \citep{foong2020meta} except that the convolution modules are removed for the fair comparison since the inference objective is our research focus.
To enable researchers to implement our developed method in studies, we leave the anonymous Github link here: \url{https://anonymous.4open.science/r/SI_NPs-C832}, where we provide an example of SI-NPs.
The full code implementations of our method will be released in the final version.

\subsection{Meta Learning Datasets}

\textbf{Synthetic Regression.}
We use the Gaussian process simulator to generate different stochastic functions.
Three types of kernels are used to formulate diverse Gaussian processes.
This is the same as that in \citep{lee2020bootstrapping,kawano2020group}.

For each task, we generate $x$ from the uniform distribution $U[-2.0,2.0]$ and the mean function is zero.
The number of context points is drawn from $n\sim U[3,47]$ and the number of target points is $n+m$ with $m\sim U[3,50-n]$.
The following kernel functions specify different Gaussian processes in this paper.

\begin{itemize}
    \item Matern $-\frac{5}{2}$ kernel: 
          $$k(x,x^{\prime})=s^2\left(1+\frac{\sqrt{5}d}{l}+\frac{5d^2}{3l^2}\right)\exp\left(\frac{-\sqrt{5}d}{l}\right)$$
          \
          \text{with}
          \
          $d=4|x-x^{\prime}|$, $s\sim U[0.1,1.0]$ and $l\sim U[0.1,0.6]$;
    \item RBF kernel:
          $$k(x,x^{\prime})=s^2\exp\left(-\frac{(x-x^{\prime})^2}{2l^2}\right)$$
          \
          \text{with}
          \
          $s\sim U[0.1, 1.0]$ and $l\sim U[0.1, 0.6]$
    \item Periodic kernel: 
          $$k(x,x^{\prime})=s^2\exp\left(\frac{-2\sin^2(\frac{\pi||x-x^{\prime}||^2}{p})}{l^2}\right)$$
          \
          \text{with}
          \
          $s\sim U[0.1, 1.0]$, $l\sim U[0.1, 0.6]$
          and $p\sim U[0.1, 0.5]$
\end{itemize}

\textbf{Image Datasets.}
Benchmark image datasets include MNIST \citep{bottou1994comparison}, FMNIST \citep{xiao2017fashion}, CIFAR10 \citep{krizhevsky2009learning} and SVHN \citep{sermanet2012convolutional}.
In meta training and testing, we randomly select the number of context pixels $n$ for each sampled batch of images ($n\sim U[1,784]$ in MNIST/FMNIST and $n\sim U[1,1023]$ in CIFAR10/SVHN).
For pixel values, they are transformed to normalized Tensors via pytorch package.
All other set-ups are the same as in \citep{garnelo2018conditional,garnelo2018neural,kim2019attentive}.

\textbf{Sim2Real Dataset.}
This is a part of additional experiments.
We retain the preprocessing and meta training set-up in \citep{gordon2019convolutional}.
The meta training datasets include the Lotka-Volterra simulation samples and Predator-Prey's real-world dataset.
All datasets are normalized before the meta training process.
Please refer to \citep{gordon2019convolutional} for more details.

\subsection{Neural Architectures \& Optimizations \& Evaluation Set-up}\label{append_na}

\textbf{Synthetic Regression.}
In terms of neural architectures,
we use the same setup as that in \citep{gordon2019convolutional,lee2020bootstrapping} for all baselines.
The dimension of latent variables is 128.
The \texttt{Encoder} is a two hidden layer neural network with 128 neuron units for each layer.
The \texttt{Decoder} is a one hidden layer neural network with 128 neuron units.
The optimizer's learning rate is $5e-4$.
For all methods, we sample 100 tasks as one batch to train in each iteration, and the number of iteration steps in meta training is 100000.
In meta training, the numbers of Monte Carlo samples for ML-NPs and SI-NPs are 16, while those in meta testing are 32.

\textbf{Image Completion.}
The setup is the same with that in \citep{garnelo2018conditional} and works for all NPs variants.
As default, we set the dimension of latent variables $z$ as 128 for all baselines.
For all baselines, the \texttt{Encoder} is constituted with three hidden layers (128 neuron units each).
The \texttt{Decoder} has five hidden layers (128 neuron units each) as well.
The learning rate for the optimizer is $5e-4$.
The training batch size for all images is 4 and we meta train the model until convergence (the maximum epoch number for MNIST/FMNIST is 100, and that for CIFAR10/SVHN is 200, and early stop is used when it reaches convergence).
In meta training, the numbers of Monte Carlo samples are 16 for ML-NPs and 8 for SI-NPs (We find that 8 Monte Carlo samples are enough for SI-NPs to obtain competitive performance and increasing the number of particles will require more computations but the performance improvement is minor).
In meta testing, the Monte Carlo sample numbers are 32 for latent variable models.

\textbf{Sim2Real.}
We use the same setup as that in \citep{gordon2019convolutional,lee2020bootstrapping} for all baselines.
The neural architectures are the same as the Synthetic regression settings, except the dimension of output variables changes to 2.
For other settings, please refer to \citep{gordon2019convolutional} for more details.

As in \citep{garnelo2018conditional}, for the \texttt{Decoder} in all models, we use the modified standard deviation variable $\hat{\sigma_i}=0.1+0.9*\sigma_i$ for the output distribution $p(y_i\vert x_i,z;\vartheta)$ in all benchmarks, where $\sigma_i=\text{MLP}_{\vartheta}(x_i,z_i)$.

\textbf{Evaluation Set-up of Datasets.}
For Table (\ref{gp_table_results})/(\ref{test_image_table}) in meta testing, we keep the same set-up as \citep{foong2020meta,gordon2019convolutional,lee2020bootstrapping,kawano2020group}, which randomly select the number of context points in each batch and average the testing results of all batches as the final result.
The range of the number of context points can be found in the above subsection.

\section{More Experimental Results}

\subsection{SI-NPs with A Learnable Proposal Distribution}\label{append_exp_learn_proposal}
We have investigated the use of a learnable proposal distribution in SI-NPs.
In this case, an additional proposal distribution $q_{\eta}(z)=\mathcal{N}(z;\mu_{\eta}(\mathcal{D}_{\tau}^{T}),\Sigma_{\eta}(\mathcal{D}_{\tau}^{T}))$ is introduced to sample the latent variables.
The neural architecture of such a proposal distribution is the same as the approximate posterior in vanilla NPs, but the role is entirely distinguished.

Unfortunately, we find that it is difficult to stabilize performance even though coefficients of the generative log-likelihood $\ln p(\mathcal{D}_{\tau}^{T}\vert z^{(b)};\vartheta)$, the functional prior log-likelihood $\ln p(z^{(b)}\vert\mathcal{D}_{\tau}^{C};\vartheta)$ and the proposal likelihood $\ln q_{\eta}(z^{(b)}\vert\mathcal{D}_{\tau}^{T})$ are tuned in a lot of trials.
Since there is no convergence solution when assigning the same weights to minimize Eq. (\ref{weight_proposal_obj_append}) and the negative of Eq. (\ref{iw_obj}) with a shared optimizer, we do not report the result here.

\subsection{Evaluation with More Monte Carlo Particles}
In the main paper, evaluating the exact likelihood $\mathcal{L}(\vartheta)=\ln
    \left[\int p(\mathcal{D}_{\tau}^{T}\vert z;\vartheta)p(z\vert\mathcal{D}_{\tau}^{C};\vartheta)dz\right]$ is intractable for the studied latent variable models, so we report the evaluation results by setting $B$ the number of the Monte Carlo particles in Eq. (\ref{mc_eval}) fixed.

\begin{equation}
\begin{split}
    \mathcal{L}_{\text{MC}}(\vartheta;B)=\ln\left[\frac{1}{B}\sum_{b=1}^{B}\exp\left(\ln p(\mathcal{D}_{\tau}^{T}\vert z^{(b)};\vartheta)\right)\right]
\end{split}
\quad
\text{with}
\quad
z^{(b)}\sim p(z\vert\mathcal{D}_{\tau}^{C};\vartheta)
    \label{mc_eval}
\end{equation}

\textbf{Empirical Explanation.}
We set $B=32$ in evaluation based on the empirical observations because the evaluated likelihood for all experiments nearly reaches the convergence or does not significantly increase.
To see this point, we give the illustration of the evaluation results by varying the number of Monte Carlo samples and analyzing the test log-likelihoods in image completion tasks.
As displayed in Fig. (\ref{iw_num_vary_particles}),
SI-NPs show performance improvement with more particles in FMNIST/CIFAR10/SVHN, and $B=32$ is sufficient enough in evaluation\footnote{Even though in FMNIST, more performance gain is observed using SI-NPs with more particles, $ B=32 $ is sufficient and fair enough for all benchmarks.}.
For CNPs, the performance does not change with more particles since the functional prior is collapsed.
While for NPs, the performance goes far worse than importance-weighted ones and does not significantly improve with more particles.
So we only show the ML-NPs and SI-NPs for better visual comparison.

\begin{figure}[h]
\centering
\includegraphics[width=0.99\textwidth]{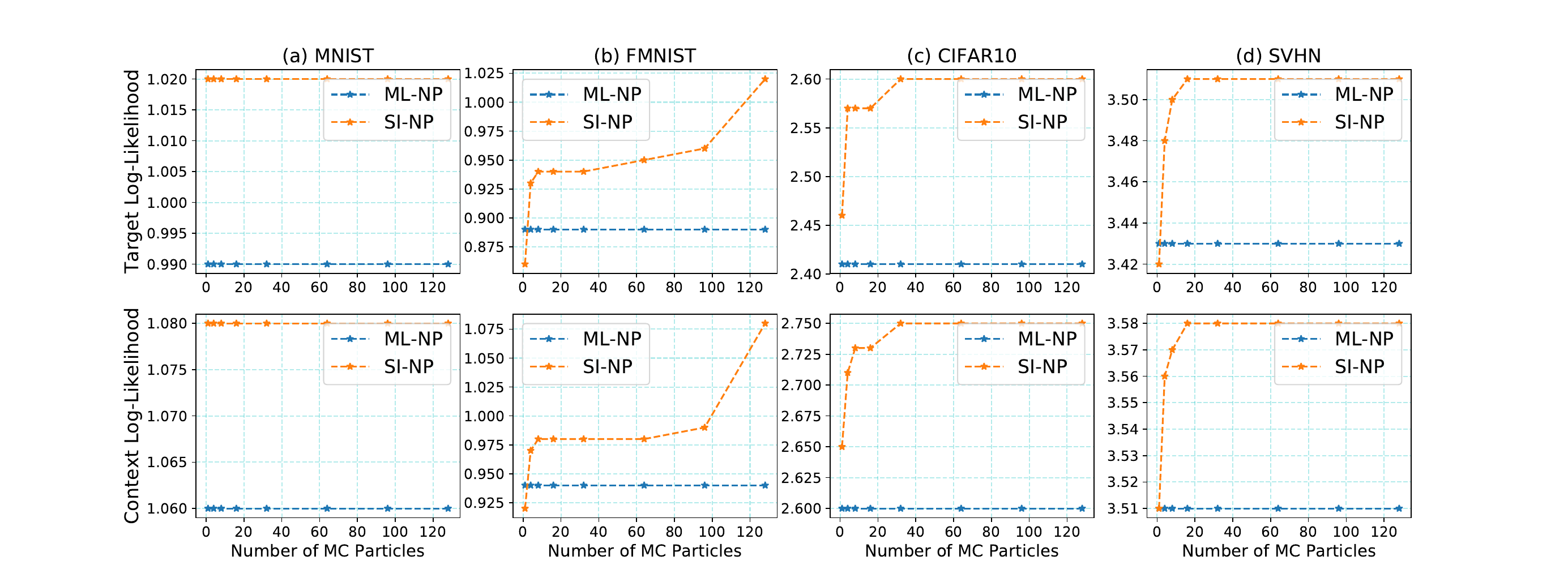}
\caption{Evaluation of Image Completion with Varying Number of Monte Carlo Particles.
The $x$-axis records the number of Monte Carlo particles $B$ in the set $\{1, 4, 8, 16, 32, 64, 96, 128\}$.
The first row is to report the log-likelihood of the target points, while the second row is for the log-likelihood of the context points.
}
\label{iw_num_vary_particles}
\end{figure}

\textbf{Theoretical Explanation.}
Note that ML-NPs and SI-NPs are importance weighted methods, the number of Monte Carlo samples matters. 
It is theoretically proved in Theorem 1 in the paper importance weighted autoencoders \citep{burda2016importance} 
that $\mathcal{L}_{\text{MC}}(\vartheta;B_1)\geq\mathcal{L}_{\text{MC}}(\vartheta;B_2)$ with $B_1\geq B_2$.
Our developed SI-NPs can be viewed as the conditional version of importance weighted autoencoders, which explains the empirical observations in Fig. (\ref{iw_num_vary_particles}).
However, when the prior is collapsed to a deterministic embedding, we do not expect this effect.

\subsection{Influence of Dimensions of Latent Variables}
It is unrealistic to explore all hyper-parameters' influence: e.g., learning rates, numbers of layers, dimensions of each layer, types of activation function, batch size in training, dimensions of latent variables, the Cartesian of these hyper-parameters causes dimension explosion, and the required time of running experiments can be more than one year with limited GPUs.
So we keep most of the set-up of the above hyper-parameters the same as that in original papers of CNP/NP/ANPs.

In this subsection, we study the influence of dimensions of latent variables on SI-NPs.
Considering that the meta training process is time expensive, we report the result on FMNIST image completion in Table (\ref{test_varydim_fmnist_table}).
It can be observed that this factor rarely influences test performance.

\setlength{\tabcolsep}{3.8pt}
\begin{table}[h]
\vspace{-10.0pt}
\begin{small}
  \begin{center}
    \caption{Test average log-likelihoods with reported standard deviations for image completion in FMNIST (5 runs). 
    We test the performance of different optimization objectives in both context data points and target data points.
    We use 32 Monte Carlo samples from the functional prior to evaluate the average log-likelihoods.
    }
    \label{test_varydim_fmnist_table}
    \begin{tabular}{c|cc|cc|cc|cc|cc|}
      \toprule 
       &\multicolumn{2}{c}{$\texttt{dim\_lat}=32$} &\multicolumn{2}{c}{$\texttt{dim\_lat}=64$} &\multicolumn{2}{c}{$\texttt{dim\_lat}=128$} &\multicolumn{2}{c}{$\texttt{dim\_lat}=256$}\\
      \# &context &target &context &target &context &target &context &target\\
      \toprule 
      $\mathcal{L}_{\text{SI-NP}}$ (ours)  &0.98\tiny{$\pm$0.006} &0.95\tiny{$\pm$0.004} &0.98\tiny{$\pm$0.004} &0.93\tiny{$\pm$0.005} &0.98\tiny{$\pm$0.004} &0.94\tiny{$\pm$0.005} &0.98\tiny{$\pm$0.004} &0.93\tiny{$\pm$0.005} \\
      \bottomrule 
    \end{tabular}
  \end{center}
\end{small}
\vspace{-10.0pt}
\end{table}

\subsection{Sim2Real Experimental Results}

Following that in \citep{gordon2019convolutional,lee2020bootstrapping}, 
We conduct experiments in Lotka-Volterra dynamical systems.
The transition dataset is collected in the mentioned simulator for meta training.
The testing scenarios include Lotka-Volterra simulators and a Predator-Prey's real-world dataset Hudson's Baye hare-lynx.

The meta testing results are reported in Table (\ref{sim2real_table}).
It can be seen that SI-NPs can achieve the best performance in Lotka-Volterra and Predator-Prey datasets.
Since the Predator-Prey dataset is out of the meta training distribution, the log-likelihoods are significantly lower than the Lotka-Volterra ones.

\begin{figure}[h]
\centering
\includegraphics[width=1.0\textwidth]{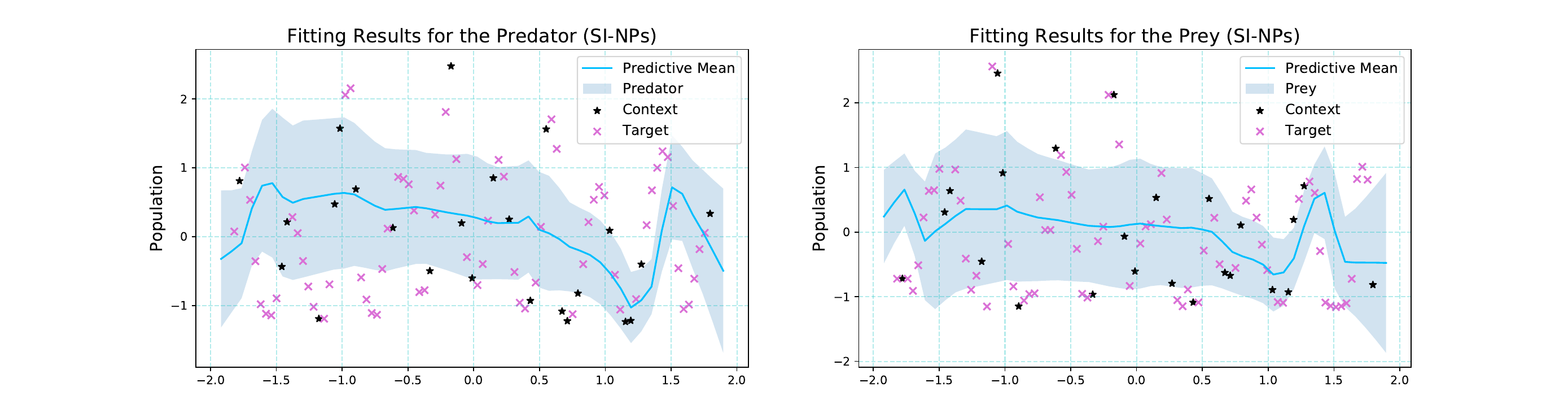}
\caption{From the Left to the Right are population fitting results with $\pm 1$ standard deviation in the predator and prey datasets with the meta trained SI-NPs.
The $x$-axis corresponds to normalized years from 1845 to 1935 in order.
}
\label{append_sim2real_sinp}
\end{figure}

Meanwhile, we plot the predicted predator and prey population evolution in Fig. (\ref{append_sim2real_sinp}).
As illustrated, meta trained SI-NPs can roughly characterize the trends and they can capture critical turning points.

\setlength{\tabcolsep}{12.0pt}
\begin{table}[h]
\begin{small}
  \begin{center}
    \caption{Test average log-likelihoods of target data points with reported standard deviations for Sim2Real (4 runs). We test the performance of different optimization objectives in both context data points and target data points.
    For each run, we randomly sample 1000 functions as tasks to evaluate.}
    \label{image_table}
    \begin{tabular}{c|c|c|}
      \toprule 
       \#&\multicolumn{1}{c}{\text{Sim (Lotka-Volterra)}} &\multicolumn{1}{c}{\text{Real(Predator-Prey)}}  \\
      \midrule 
      $\mathcal{L}_{\text{NP}}$ &-0.457\tiny{$\pm$0.015} &-3.275\tiny{$\pm$0.507}  \\
      $\mathcal{L}_{\text{CNP}}$ &-0.273\tiny{$\pm$0.026} &-3.012\tiny{$\pm$0.034} \\
      $\mathcal{L}_{\text{ML-NP}}$ &-1.381\tiny{$\pm$0.196} &-2.969\tiny{$\pm$0.165} \\
      \toprule 
      $\mathcal{L}_{\text{SI-NP}}$ (ours)  &-0.240\tiny{$\pm$0.052} &-2.934\tiny{$\pm$0.33} \\
      \bottomrule 
    \end{tabular}
    \label{sim2real_table}
  \end{center}
\end{small}
\end{table}

\subsection{Augmenting SI-NPs with Attention Networks}\label{append_aug_ind_bias}
Theoretically, we can combine different optimization objectives of NPs with various structural inductive biases.
Take the attention inductive bias as an example; we augment the vanilla SI-NP with attention networks the same as in \citep{kim2019attentive} and compare the augmented one with other augmented baselines.
To enable fair comparison, we also augment other baselines with attention networks. 
We apply the modification to all methods, and this operation results in ANP \citep{kim2019attentive}, ML-ANP and SI-ANPs.

\textbf{Neural Architectures.}
We retain all neural architectures the same as in Appendix (\ref{append_na}), except for adding the deterministic path to obtain a local deterministic variable \citep{kim2019attentive}.
The deterministic path is built with a cross attention encoder.
Due to memory restriction, one head is used to obtain a 128 dimensional local variable, and the neural network set-up for query/key/value is the same as \citep{kim2019attentive}\footnote{ \url{https://github.com/deepmind/neural-processes}}.

\textbf{Optimization.}
We retain the optimization step the same as those in Appendix (\ref{append_na}).

\subsubsection{Synthetic Regression}
In Table (\ref{gp_att_table_results}), it can be seen SI-ANPs outperform ANPs in all kernel cases.
SI-ANPs show a slight advantage over ML-ANPs in Marten and RBF kernels.
We also observe anomaly results of ANPs in the Periodic kernel case, which illustrates that the use of attention might deteriorate the performance.

\setlength{\tabcolsep}{18.0pt}
\begin{table}[h]
  \begin{center}
    \caption{Test average log-likelihoods of target data points with reported standard deviations for 1-dimensional Gaussian process dataset with various kernels (5 runs).
    For each run, we randomly sample 1000 functions as tasks to evaluate.
    All NP models are augmented with attention networks.
    Settings are same as in the main paper.}
    \label{gp_att_table_results}
    \begin{tabular}{c|c|c|c|c|}
      \toprule 
      \# &\multicolumn{1}{c}{\text{Matern} $-\frac{5}{2}$} &\multicolumn{1}{c}{\text{RBF}} &\multicolumn{1}{c}{\text{Periodic}}  \\
      \toprule 
      $\mathcal{L}_{\text{ANP}}$ \citep{kim2019attentive} &0.98\tiny{$\pm$0.021} &1.06\tiny{$\pm$0.020} &-1.346\tiny{$\pm$0.102}  \\
      $\mathcal{L}_{\text{ML-ANP}}$ \citep{foong2020meta} &0.985\tiny{$\pm$0.019} &1.062\tiny{$\pm$0.019} &0.574\tiny{$\pm$0.023} \\
      \toprule 
      $\mathcal{L}_{\text{SI-ANP}}$ (ours)  &0.995\tiny{$\pm$0.017} &1.071\tiny{$\pm$0.017} &0.56\tiny{$\pm$0.024} \\
      \bottomrule 
    \end{tabular}
  \end{center}
\end{table}

\subsubsection{Image Completion}
In Table (\ref{test_image_att_table_attention}), we report the results in image completion.
We notice that the SI-ANP significantly beats other models in FMNIST/SVHN/CIFAR10 and is comparable with the ML-ANP in MNIST.
Meanwhile, the performance gap between vanilla NPs and SI-NPs in Table (\ref{test_image_table}) is quite huge, while this situation is alleviated after adding attention modules.
All of these indicate that incorporating structural inductive biases in complicated tasks is also necessary.
Combining SI-NPs with more powerful structural inductive biases is the top choice for boosting performance.

\setlength{\tabcolsep}{10pt}
\begin{table}[h]
  \begin{center}
    \caption{Test average log-likelihoods of target data points with reported standard deviations for image completion in MNIST/FMNIST/SVHN/CIFAR10 (4 runs). 
    All NP models are augmented with attention networks.
    Same as in the main paper, we test the performance of different optimization objectives in target data points.
    We use 32 Monte Carlo samples from the functional prior to evaluate the average log-likelihoods.
    }
    \label{test_image_att_table_attention}
    \begin{tabular}{c|cccc|c|}
      \toprule 
       \#&\multicolumn{1}{c}{\text{MNIST}} &\multicolumn{1}{c}{\text{FMNIST}} &\multicolumn{1}{c}{\text{SVHN}} &\multicolumn{1}{c}{\text{CIFAR10}} \\
      \toprule 
      $\mathcal{L}_{\text{ANP}}$ \citep{kim2019attentive} &1.173\tiny{$\pm$0.008} &1.101\tiny{$\pm$0.01} &4.011\tiny{$\pm$0.005} &3.605\tiny{$\pm$0.016} \\
      $\mathcal{L}_{\text{ML-ANP}}$ \citep{foong2020meta} &1.216\tiny{$\pm$0.003} &1.172\tiny{$\pm$0.009} &4.017\tiny{$\pm$0.002} &3.545\tiny{$\pm$0.01} \\
      \toprule 
      $\mathcal{L}_{\text{SI-ANP}}$ (ours)  &1.212\tiny{$\pm$0.004} &\textbf{1.174}\tiny{$\pm$0.005} &\textbf{4.040}\tiny{$\pm$0.002} &\textbf{3.710}\tiny{$\pm$0.028} \\
      \bottomrule 
    \end{tabular}
  \end{center}
\end{table}

\subsubsection{Sim2Real}

In Table (\ref{sim2real_att_table}), we report the results with the attention module augmentation.
As observed, ANPs, ML-ANPs, and SI-ANPs exhibit comparable performance in Lotka-Volterra simulation.
As for the Predatory-Prey testing results, the conclusion is similar to that in Table (\ref{sim2real_table}) without attention augmentations.
\setlength{\tabcolsep}{18.0pt}
\begin{table}[h]
\begin{small}
  \begin{center}
    \caption{Test average log-likelihoods of target data points with reported standard deviations for Sim2Real (4 runs). We test the performance of different optimization objectives augmented by attention inductive bias \citep{kim2019attentive} in both context data points and target data points.
    For each run, we randomly sample 1000 functions as tasks to evaluate.}
    \label{image_table}
    \begin{tabular}{c|c|c|}
      \toprule 
       \#&\multicolumn{1}{c}{\text{Sim (Lotka-Volterra)}} &\multicolumn{1}{c}{\text{Real(Predator-Prey)}}  \\
      \midrule 
      $\mathcal{L}_{\text{ANP}}$ &2.211\tiny{$\pm$0.017} &-3.174\tiny{$\pm$0.121} \\
      $\mathcal{L}_{\text{ML-ANP}}$ &2.203\tiny{$\pm$0.042} &-3.624\tiny{$\pm$0.152} \\
      \toprule 
      $\mathcal{L}_{\text{SI-ANP}}$ (ours)  &2.203\tiny{$\pm$0.026} &-2.822\tiny{$\pm$0.316} \\
      \bottomrule 
    \end{tabular}
    \label{sim2real_att_table}
  \end{center}
\end{small}
\end{table}

In this case, we guess the local deterministic embedding from the attention network plays a more critical role than the global latent variable in fitting these two datasets.
The visualization of fitting Predator-Prey samples with SI-ANPs is given in Fig. (\ref{append_sim2real_sianp}).
The real-world samples are well fitted with well quantified standard deviations.
We notice that the measured predictive mean square errors and the negative log-likelihoods of the target data points in the Predator-Prey testing dataset are comparable using SI-NPs and SI-ANPs.
However, the fitting result of the context data points are distinguished a lot: the mean square error of SI-NPs in 4 runs is $0.285\tiny{\pm0.007}$ and that of SI-ANPs is $0.029\tiny{\pm0.004}$.
This implies that the attention network in the real-world dataset focuses more on the context point fitting while the global latent variable is for the general trend characterization.

\begin{figure}[h]
\centering
\includegraphics[width=1.0\textwidth]{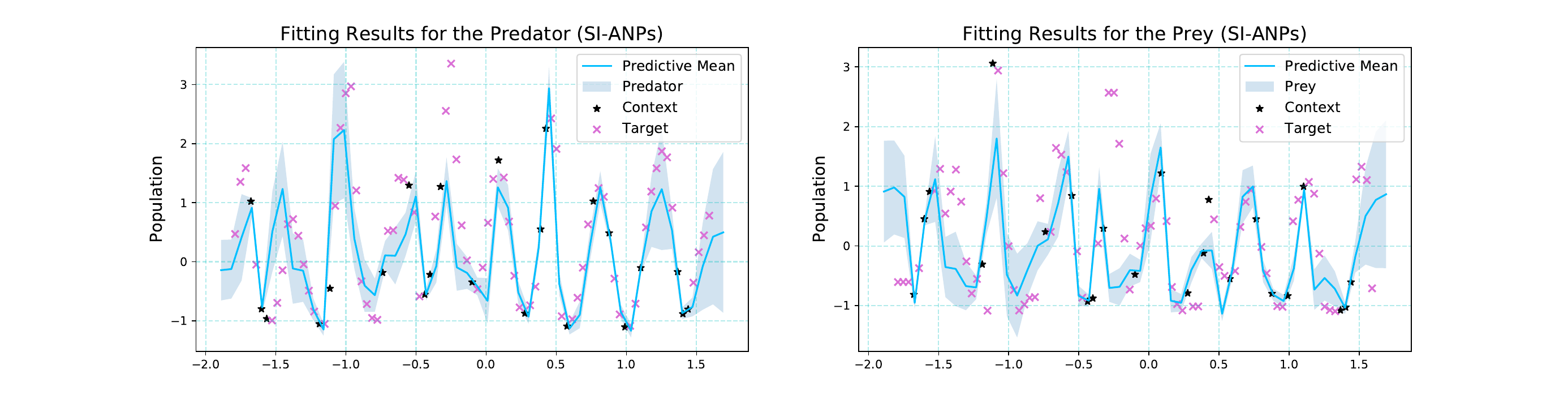}
\caption{From the Left to the Right are population fitting results with $\pm 1$ standard deviation in the predator and prey datasets with the meta trained SI-ANPs.
The $x$-axis corresponds to normalized years from 1845 to 1935 in order.
}
\label{append_sim2real_sianp}
\end{figure}

\subsection{Additional Visualizations}

\subsubsection{More Synthetic Regression Related Results}

We include more visualized results with the meta-trained models for Gaussian process datasets in Fig. (\ref{append_demo_nonatt_fig_mar})/(\ref{append_demo_nonatt_fig_rbf})/(\ref{append_demo_nonatt_fig_per}).
An illustrated, periodic kernel cases are most challenging, and SI-NPs can learn faint but crucial fluctuation signals from mean functions, while vanilla NPs fail to capture them. 
For other cases, the behaviors of these models are similar in characterizing trends: vanilla NPs tend to show higher variance in Marten kernel cases.
SI-NPs and ML-NPs are comparable in Marten and RBF kernel cases.
CNPs seem to best match the context points in both cases.

\begin{figure}
    \centering
    \includegraphics[width=1.0\linewidth]{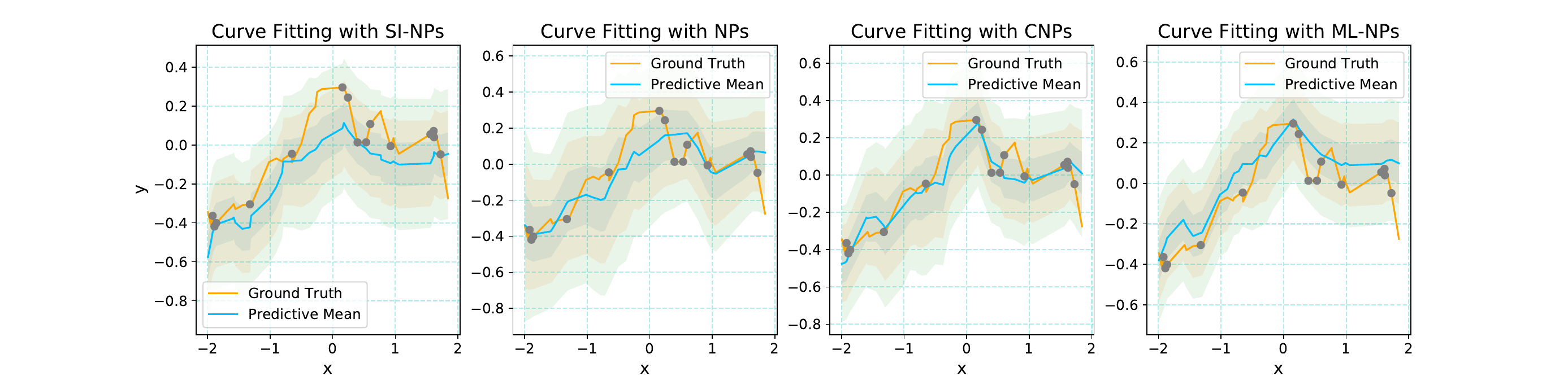}
    \caption{Examples of curve fitting in Marten kernel cases.}
    \label{append_demo_nonatt_fig_mar}
\end{figure}

\begin{figure}
    \centering
    \includegraphics[width=1.0\linewidth]{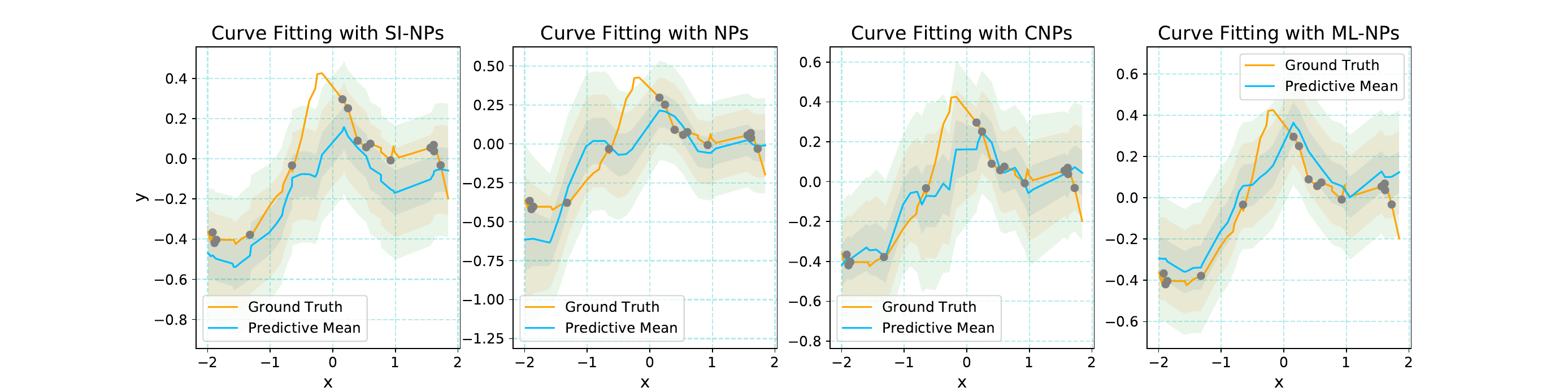}
    \caption{Examples of curve fitting in RBF kernel cases.}
    \label{append_demo_nonatt_fig_rbf}
\end{figure}

\begin{figure}
    \centering
    \includegraphics[width=1.0\linewidth]{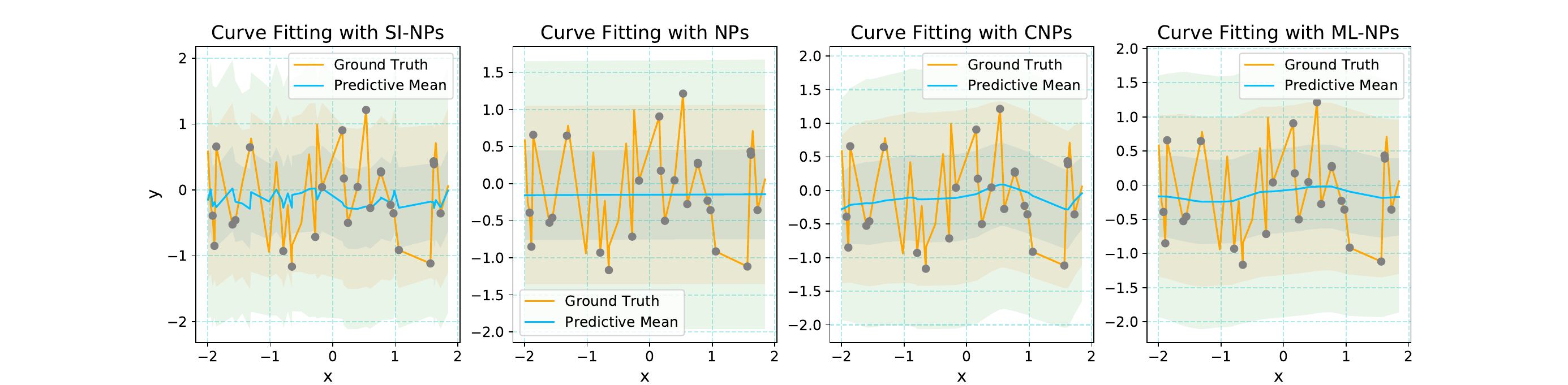}
    \caption{Examples of curve fitting in Periodic kernel cases.}
    \label{append_demo_nonatt_fig_per}
\end{figure}

In Fig.s (\ref{append_demo_mar_fig})/(\ref{append_demo_rbf_fig})/(\ref{append_demo_per_fig}), we respectively plot the curve fitting results in all kernel cases when all latent variable models are augmented by attention networks.
In Marten and RBF cases, all attention augmented models well fit data points.
Nevertheless, in RBF cases, SI-NPs can better capture data point-dependent deviations.
In Periodic cases, it can be observed that SI-ANPs can precisely capture fluctuations and quantify more convincing uncertainty.
Sometimes, ANPs underestimate the uncertainty while ML-ANPs fail to show data point distinguished uncertainty.

\begin{figure}
     \centering
         \includegraphics[width=0.9\textwidth]{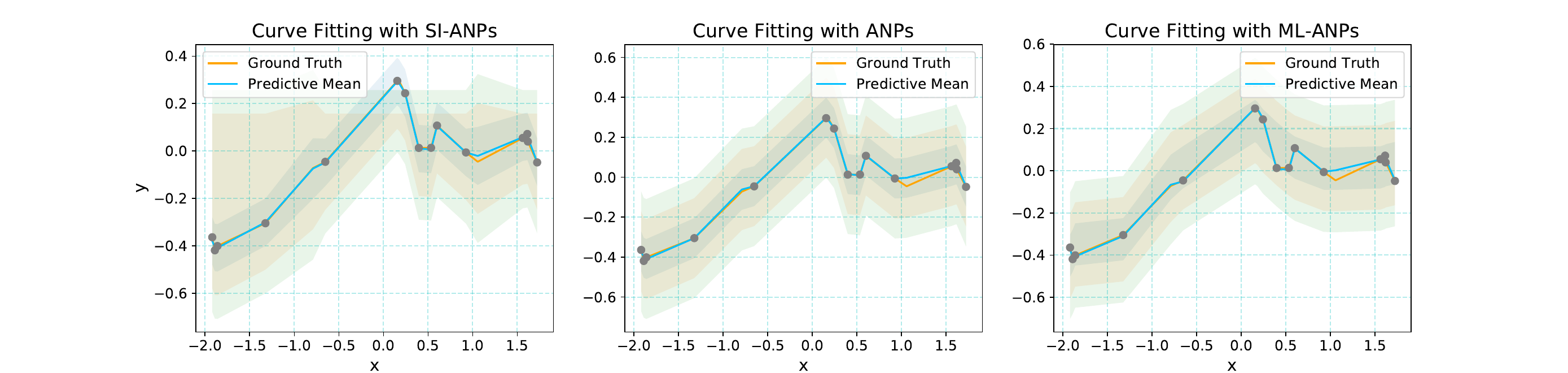}
    \caption{Examples of Curve Fitting in Matern Kernel Cases.}
    \label{append_demo_mar_fig}
\end{figure}

\begin{figure}
     \centering
         \includegraphics[width=0.9\textwidth]{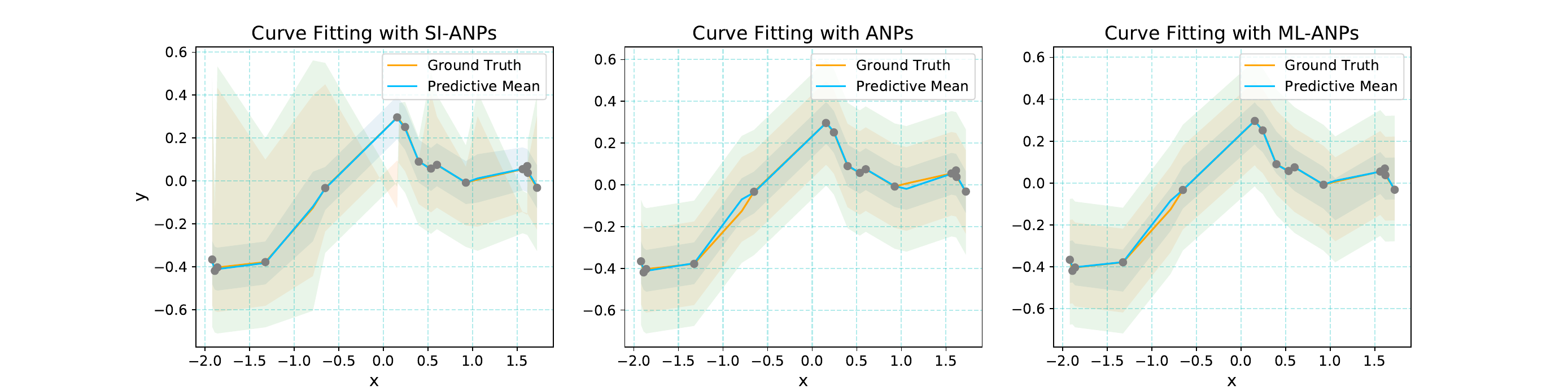}
    \caption{Examples of Curve Fitting in RBF Kernel Cases.}
    \label{append_demo_rbf_fig}
\end{figure}

\begin{figure}
     \centering
         \includegraphics[width=0.9\textwidth]{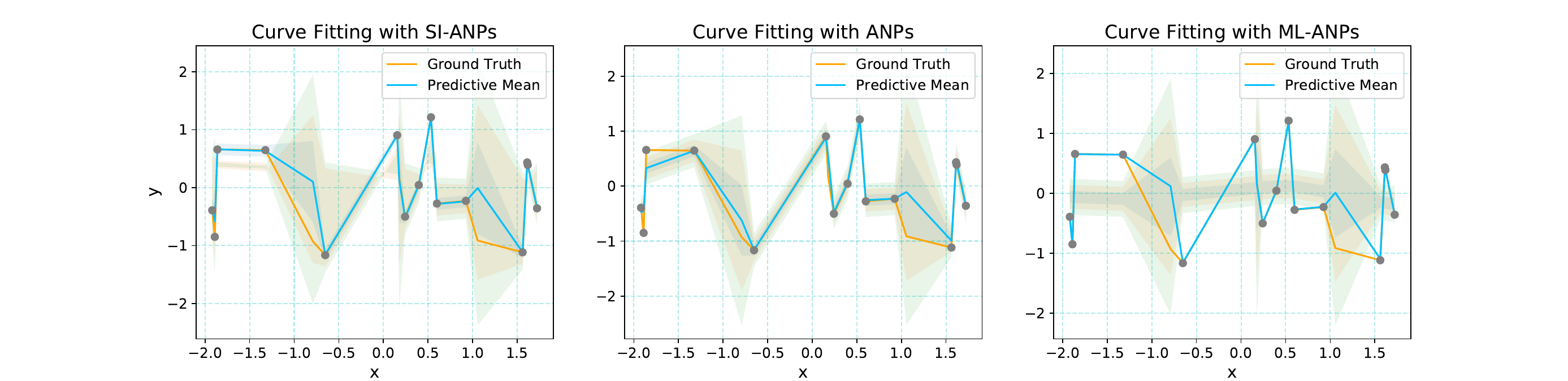}
    \caption{Examples of Curve Fitting in Periodic Kernel Cases.}
    \label{append_demo_per_fig}
\end{figure}

\subsubsection{More Image Completion Related Results}

\begin{figure}[h]
\centering
\includegraphics[width=1.0\textwidth]{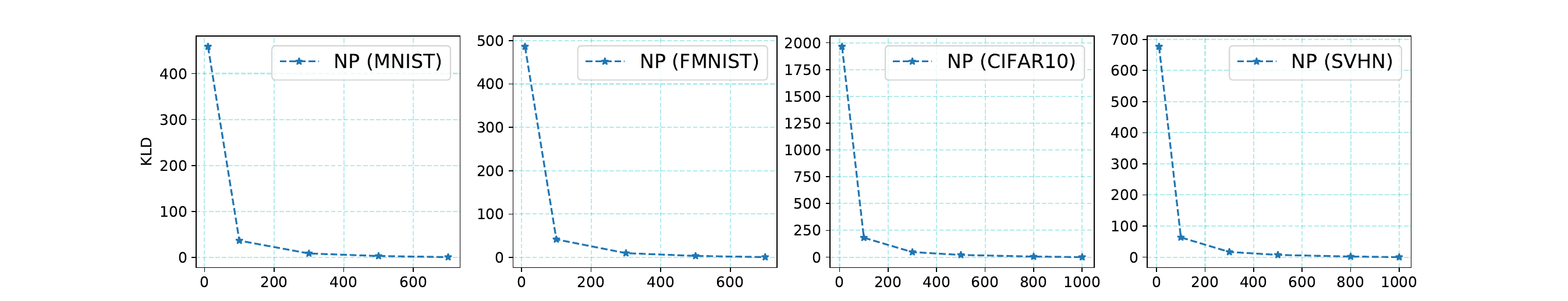}
\caption{Evaluation of KL Divergence Terms in Vanilla NPs.
In meta testing, we still vary the number of context points in image datasets.
For NPs, the KL divergence value $D_{KL}\left[q_{\phi}(z)\parallel q_{\phi}(z\vert \mathcal{D}_{\tau}^{C})\right]$ is computed.
}
\label{lv_np_kld}
\end{figure}

In Fig. (\ref{lv_np_kld}), the scale of computed KL divergence values in vanilla NPs is positively correlated with the semantic complexity.
Also, note that the approximate functional prior in vanilla NPs seldom collapses, but it has a theoretical bias away from the optimal functional prior according to \textbf{Remark} (\ref{remark_np_suboptim}).
We can also find with more context points, the approximate prior is closer to the approximate posterior, so the value decreases accordingly.

In Fig. (\ref{append_ic_1})/(\ref{append_sinp_ic_2})/(\ref{append_sinp_ic_3})/(\ref{append_sinp_ic_4}), we sample a collection of images from MNIST/FMNIST/SVHN/CIFAR10 to visualize the completed results.
We can find that without structural inductive biases, SI-NPs can reasonably complete the images in MNIST/FMNIST/SVHN and exhibit the uncertainty from partial observations. 
The learned functional prior, such as that in SVHN, can generate images that are different from the ground truth due to partial observation.
As for CIFAR10, it has more complicated semantics and is challenging for SI-NPs with only MLPs in neural architectures. 

\begin{figure}
    \centering
    \includegraphics[width=0.5\linewidth]{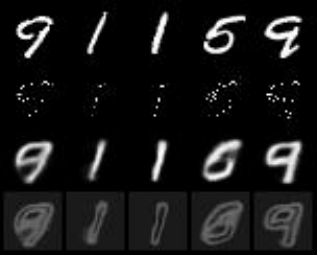}
    \caption{Examples of Image Completion Results using SI-NPs.
    From top to bottom in rows are original images, context points, means, and variances of completed images.}
    \label{append_sinp_ic_1}
\end{figure}

\begin{figure}
    \centering
    \includegraphics[width=0.5\linewidth]{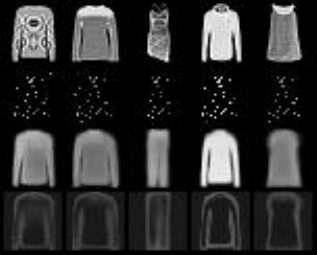}
    \caption{Examples of Image Completion Results using SI-NPs.
    From top to bottom in rows are original images, context points, means, and variances of completed images.}
    \label{append_sinp_ic_2}
\end{figure}

\begin{figure}
    \centering
    \includegraphics[width=0.5\linewidth]{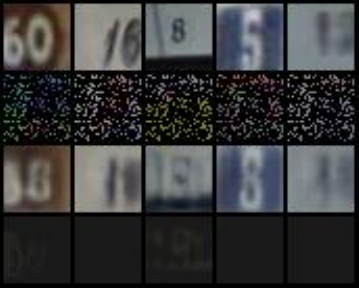}
    \caption{Examples of Image Completion Results using SI-NPs.
    From top to bottom in rows are original images, context points, means, and variances of completed images.}
    \label{append_sinp_ic_3}
\end{figure}

\begin{figure}
    \centering
    \includegraphics[width=0.5\linewidth]{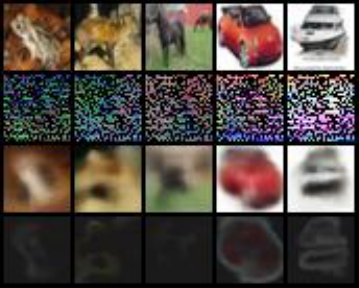}
    \caption{Examples of Image Completion Results using SI-NPs.
    From top to bottom in rows are original images, context points, means, and variances of completed images.}
    \label{append_sinp_ic_4}
\end{figure}

For cases when SI-NPs are augmented by attention neural networks \citep{kim2019attentive}, we show more generated examples with learned SI-ANPs in Fig. (\ref{vis_anpvem_cifar10})/(\ref{vis_anpvem_fmnist}).
As can be seen, the completed results are not blurred and have a high quality.
Notably, the quantified variances are decreased with the increase in the context points, which shows excellent asymptotic behavior in this domain.

\begin{figure}[h]
\centering
\includegraphics[width=0.95\textwidth]{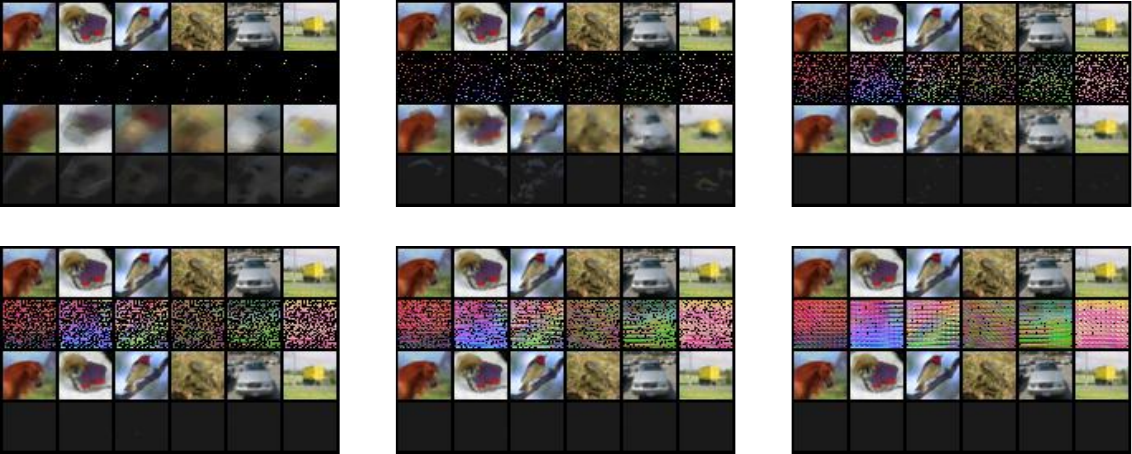}
\caption{Examples of CIFAR10 Image Completion Results using SI-ANPs.
    From left to right and top to bottom are cases with 10, 100, 300, 500, and 800 randomly selected pixels as the context points.
}
\label{vis_anpvem_cifar10}
\end{figure}

\begin{figure}[h]
\centering
\includegraphics[width=0.95\textwidth]{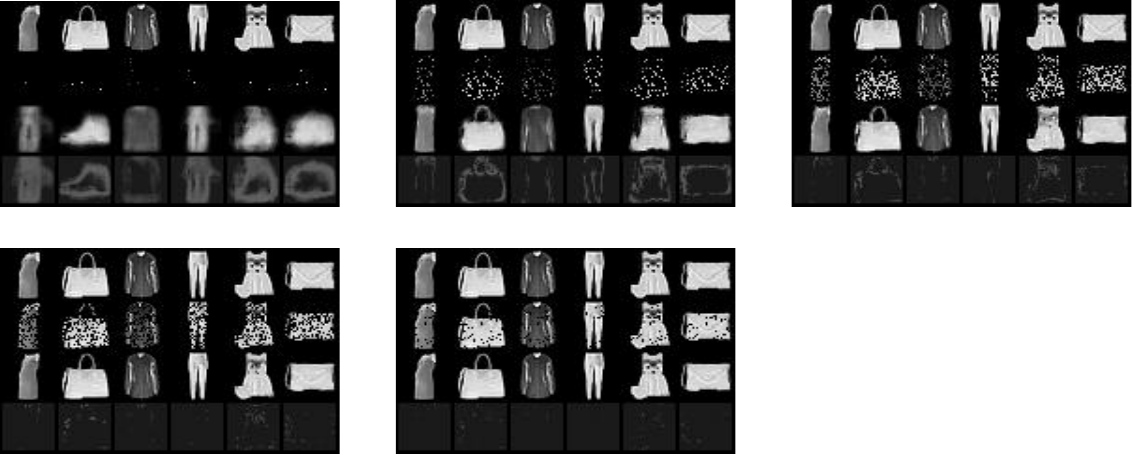}
\caption{Examples of FMNIST Image Completion Results using SI-ANPs.
    From left to right and top to bottom are cases with 10, 100, 300, 500, and 700 randomly selected pixels as the context points.
}
\label{vis_anpvem_fmnist}
\end{figure}

\section{Computation Tools}
In this project, we use NVIDIA 1080-TiGPUs to finish all experiments.

\end{document}